\title{On Minimax Optimal Offline Policy Evaluation}
\author{
Lihong Li \\
Microsft Research \\
\texttt{lihongli@microsoft.com} \\
\And
Remi Munos \\
INRIA \\
\texttt{remi.munos@inria.fr} \\
\AND
Csaba Szepesv\'ari \\
University of Alberta \\
\texttt{szepesva@cs.ualberta.ca} \\
}
\newcommand{\todol}[2][]{\todo[size=\tiny,color=red!20!white,#1]{Lihong: #2}}
\newcommand{\todoc}[2][]{\todo[size=\tiny,color=green!20!white,#1]{Csaba: #2}}
\newcommand{\beq}{\begin{equation}}
\newcommand{\eeq}{\end{equation}}
\newcommand{\beqa}{\begin{eqnarray}}
\newcommand{\eeqa}{\end{eqnarray}}
\newcommand{\beqan}{\begin{eqnarray*}}
\newcommand{\eeqan}{\end{eqnarray*}}
\newcommand{\vlr}{\hat{v}_{\mathrm{LR}}}
\newcommand{\vreg}{\hat{v}_{\mathrm{Reg}}}
\newcommand{\valg}{\hat{v}_\Aalg}
\newcommand{\vtrue}{v^\pi_\Phi}
\newcommand{\vtruectx}{v^{\pi,\mu}_\Phi}
\newcommand{\rmax}{R_\mathrm{max}}
\newcommand{\defeq}{:=}
\renewcommand{\P}{\mathbb{P}}
\newcommand{\E}{\mathbb{E}}
\newcommand{\V}{\mathbb{V}}
\newcommand{\1}{\mathbb{I}}
\newcommand{\one}[1]{\mathbb{I}\{#1\}}
\newcommand{\EE}[1]{\E\left[#1\right]}
\newcommand{\Prob}[1]{\P\left(#1\right)}
\let\R\undefined 
\newcommand{\R}{\mathbb{R}}
\newcommand{\Real}{\mathbb{R}}
\newcommand{\MSE}[1]{\mathrm{MSE}\left(#1\right)}
\newcommand{\MSEi}[2]{\mathrm{MSE}_{#1}\left(#2\right)}
\newtheorem{predefinition}{Definition}
\newtheorem{theorem}{Theorem}
\newtheorem{preproposition}{Proposition}
\newenvironment{proposition}[1]
{
\begin{preproposition}
}{
\end{preproposition}
}
\newtheorem{lemma}{Lemma}
\newtheorem{corollary}{Corollary}
\renewcommand{\hat}{\widehat}
\renewcommand{\phi}{\varphi}
\renewcommand{\epsilon}{\varepsilon}
\newcommand{\A}{\mathcal{A}}
\newcommand{\F}{\mathcal{F}}
\newcommand{\X}{\mathcal{X}}
\newcommand{\T}{\mathcal{T}}
\newcommand{\norm}[1]{\left\|#1\right\|}
\newcommand{\Aalg}{\mathbf{A}}
\DeclareMathOperator{\diag}{diag}
\begin{document}

\maketitle

\begin{abstract}
This paper studies the off-policy evaluation problem, where one aims to estimate the value of a target policy based on a sample of observations collected by another policy.  We first consider the multi-armed bandit case, establish a minimax risk lower bound, and analyze the risk of two standard estimators.  It is shown, and verified in simulation, that one is minimax optimal up to a constant, while another can be arbitrarily worse, despite its empirical success and popularity.  The results are applied to related problems in contextual bandits and fixed-horizon Markov decision processes, and are also related to semi-supervised learning.
\end{abstract}

\section{Introduction} \label{sec:intro}

In reinforcement learning, one of the most fundamental problems is \emph{policy evaluation} --- estimate the average reward obtained by running a given policy to select actions in an unknown system.  A straightforward solution is to simply run the policy and measure the rewards it collects.  In many applications, however, running a new policy in the actual system can be expensive or even impossible.  For example, flying a helicopter with a new policy can be risky as it may lead to crashes; deploying a new ad display policy on a website may be catastrophic to user experience; testing a new treatment on patients may simply be impossible for legal and ethical reasons; \textit{etc.}

These difficulties make it critical to do \emph{off-policy} policy evaluation~\citep{Precup00Eligibility,Sutton10Convergent}, which is sometimes referred to as \emph{offline evaluation} in the bandit literature~\citep{Li11Unbiased} or counterfactual reasoning~\citep{Bottou13Counterfactual}.  Here, we still aim to estimate the average reward of a target policy, but instead of being able to run the policy online, we only have access to a sample of observations made about the unknown system, which may be collected in the past using a \emph{different} policy.  Off-policy evaluation has been found useful in a number of important applications~\citep{Langford08Exploration,Li11Unbiased,Bottou13Counterfactual} and
can also be looked as a key building block for policy \emph{optimization} which, as in supervised learning, can often be reduced to evaluation, as long as the complexity of the policy class is well-controlled~\citep{NgJo:00}. 
For example, it has played an important role in many optimization algorithms for Markov decision processes~(e.g., \citealt{HeiMeiIg09}) 
and bandit problems~\citep{Auer02Nonstochastic,Langford08Epoch,Strehl11Learning}.
\todol{Others?  Examples for partial-monitor games?}
In the context of supervised learning, in the covariate shift literature, 
the problem of estimating losses under 
changing distributions is crucial for model selection \citep{sugimu05,Yu12Analysis} and also appears in active learning
\citep{Das11}.
In the statistical literature, on the other hand, the problem appears in the context of randomized experiments.
Here, the focus is on the two-action (binary) case where the goal is to estimate the difference between the expected rewards of the two actions \citep{Hirano03Efficient}, which is slightly (but not essentially) different than our setting.
\todoc{Mention in conclusion generalization}

The topic of the present paper is off-policy evaluation in finite settings, under a mean squared error criterion (MSE).
As opposed to the statistics literature \citep{Hirano03Efficient}, we are interested in results for \emph{finite sample sizes}.
In particular, we are interested in limits of performance (minimax MSE) given fixed policies, but unknown stochastic rewards with bounded mean reward, as well as the performance of estimation procedures compared to the minimax MSE.
We argue that the finite setting is not a key limitation when focusing on the scaling behavior of the MSE of algorithms. Moreover, we are not aware of prior work that would have studied the above problem (i.e., relating the MSE of algorithms to the best possible MSE).
Our main results are as follows:
We start with a lower bound on the minimax MSE, to set a target for the estimation procedures.
Next, we derive the exact MSE of the likelihood ratio (or importance-weighted) estimator (LR), which is shown to have an extra (uncontrollable) factor as compared to the minimax MSE lower bound. Next, we consider the estimator which estimates the mean rewards by sample means, which we call the regression estimator (REG). The motivation of studying this estimator is both its simplicity and also because it is known that a related estimator is asymptotically efficient \citep{Hirano03Efficient}. The main question is whether the asymptotic efficiency transfers into finite-time efficiency. 
Our answer to this is mixed: We show that the MSE of REG is within a constant factor of the minimax MSE lower bound, however, the ``constant'' depends on the number of actions ($K$), or a lower bound on the variance. 
We also show that the dependence of the MSE of REG on the number actions is unavoidable.
In any case, for ``small'' action sets or high noise setting, the REG estimator can be thought of as a minimax near-optimal estimator. 
We also show that for small sample sizes (up to $\sqrt{K}$) all estimators must suffer a constant MSE. 
Numerical experiments illustrate the tightness of the analysis.
Implications for more complicated settings, such as policy evaluation in contextual bandits and Markov Decision Processes (MDPs).
The question of designing a nearly minimax estimator independently of any problem parameters remains open.
All the proofs ot given in the main text can be found in the supplementary material.
 \if0
 
Previous work on off-policy evaluation often relies on the idea of importance weights to construct an unbiased estimate of rewards as if the policy were run in the actual system (\textit{e.g.,} \cite{Auer02Nonstochastic} and \cite{Precup00Eligibility}).  If variance is controlled, this estimator, which we call the likelihood ratio (LR) estimator, is consistent, converging to the true policy value almost surely as sample size grows to infinity. 
When the distributions are unknown, one can estimate the weights, and use the estimated weights.
This approach is relevant in the covariate shift problem, where, for example, the kernel-mean matching method was proposed \citep{Gretton08Covariate} and analyzed in terms of its mean-square error \citep{Yu12Analysis}.

An alternate 

Despite the importance and wide applicability, off-policy evaluation has not yet been well understood.  KMM? \cite{Gretton08Covariate,Yu12Analysis}; \todol{Add KMM literature and discussion what is missing.}

As a first step towards understanding the statistical efficiency limit of off-policy evaluation, we focus on the finite setting for a few reasons.  First of all, despite its simplicity, few results are known, surprisingly.  A complete understanding of this special case is thus a natural prerequisite for considering more general settings.  Second, lower bounds developed for finite problems immediately apply to infinite settings.  Finally, most problems encountered in practice are essentially finite, although potentially large.  Even for continuous problems, the finite case provides a good approximation that allow a thorough understanding of the inherent difficulty of the problem.

In the rest of the paper, we first consider the multi-armed bandit (Section~\ref{sec:mab}).  With mean squared error as the performance measure of an estimator, we establish a minimax lower bound, and show the \emph{suboptimality} of the popular likelihood ratio estimator.  In contrast, the less popular regression estimator turns out to be minimax optimal up to a constant factor for any sample size, and achieves the exact minimax lower bound as sample size grows to infinity.  These results are then extended to contextual bandits (Section~\ref{sec:cband}) and Markov decision processes (Section~\ref{sec:mdp}).  Finally, a connection is drawn to semi-supervised learning in Section~\ref{sec:conclude}.
\fi

%

\section{Multi-armed Bandit}
\label{sec:mab}

Let $\A=\{1,2,\ldots,K\}$ be a finite set of $K$ actions.  
Data $D^n=\{(A_i,R_i)\}_{1\le i\le n}$ is generated by the following process:
\footnote{The data $D^n$ is actually a list, not a set. We keep the notation $\{(A_i,R_i)\}_{1\le i\le n}$ for historical reasons.}
$(A_i,R_i)$ are independent copies of $(A,R)$, where $\Prob{A=a} = \pi_D(a)$ and $R \sim \Phi(\cdot|A)$ for some unknown family of distributions $\{\Phi(\cdot|a)\}_{a\in \A}$ and \emph{known} policy $\pi_D$.
 We are also given a \emph{known} target policy $\pi$ and want to estimate its value, 
 $\vtrue\defeq \E_{A\sim\pi,R\sim\Phi(\cdot|A)}[R]$ based on the knowledge of 
  $D^n$, $\pi_D$ and $\pi$, where the quality of an estimate $\hat{v}$ constructed based on $D^n$ (and $\pi,\pi_D$) is measured by its mean-squared error, $\MSE{\hat{v}}\defeq \EE{ (\hat{v}-\vtrue)^2 }$.

Define $r_\Phi(a)\defeq\E[R|A=a]$ and $\sigma^2_\Phi(a)\defeq\V(R|A=a)$, where $\V(\cdot)$
stands for the variance.
Further, let $\pi_D^*:=\min_a\pi_D(a)$.  For convenience, we will identify any function $f: \A \to \Real$ with the $K$-dimensional vector whose $k$th component is $f(k)$. Thus, $r_\Phi$, $\sigma^2_\Phi$, etc. will also be looked at as vectors.
 Note that we do not assume that the rewards are bounded from either direction. \todoc{Really?}  \todol{Where to mention $\rmax$?  Shall we assume $r\ge0$ here?}
 
A few quantities are introduced to facilitate discussions that follow:
\begin{eqnarray*}
V_1 &\defeq& \E\left[\V\left(\frac{\pi(A)}{\pi_D(A)}R|A\right)\right] = \sum_a \frac{\pi^2(a)}{\pi_D(a)}\sigma_\Phi^2(a)\,, \\
V_2 &\defeq& \V\left(\E\left[\frac{\pi(A)}{\pi_D(A)}R|A\right]\right) 
= \V\left( \frac{\pi(A)}{\pi_D(A)} r_\Phi(A) \right)
= \sum_a\frac{\pi^2(a)}{\pi_D(a)}r_\Phi(a)^2 - (v_\Phi^\pi)^2 \,.
\end{eqnarray*}
Note that $V_1$ and $V_2$ are functions of $\Phi,\pi_D$ and $\pi$, but this dependence is suppressed.
Also, $V_1$ and $V_2$ are independent in that there are no constants $c,C>0$ such that $cV_1\le  V_2 \le C V_1$ for any $\pi,\pi_D,\Phi$.
Finally, let $p_{a,n}\defeq(1-\pi_D(a))^n$ be the probability of having \emph{no} sample of $a$ in $D^n$.
\subsection{A Minimax Lower Bound}
\label{sec:mab-lb}

We start with establishing a minimax lower bound that characterizes the inherent hardness of the off-policy evaluation problem.
An estimator $\Aalg$ can be considered as a function that maps $(\pi,\pi_D,D^n)$ to an estimate of $v^\pi_\Phi$, denoted $\valg(\pi,\pi_D,D^n)$.  
Fix $\sigma^2 \defeq (\sigma^2(a))_{a\in \A}$.
We consider the minimax optimal risk subject to $\sigma^2_\Phi(a)\le \sigma^2(a)$ %
\todoc{Sometimes we use $\V(\Phi)$, sometimes $\sigma_\Phi^2$. Unify the notation or tell the reader to anticipate this.}
and $0 \le r_\Phi(a) \le \rmax$ for all $a\in \A$:
\[
R_n^* (\pi,\pi_D,\rmax,\sigma^2):= \inf_\Aalg \sup_{\Phi:\sigma^2_\Phi \le \sigma^2,0 \le r_\Phi \le \rmax} \E \left[(\valg(\pi,\pi_D,D^n)-v^\pi_\Phi)^2\right]\,,
\]
where for vectors $x,y\in \Real^K$, $x\le y$  holds if and only if $x_i\le y_i$ for $1\le i \le K$.
For $B\subset \A$, we let $p_{B,n}$ denote the probability that none of the actions in the data $D^n$ falls into $B$: $p_{B,n} = \Prob{A_1,\ldots,A_n\not\in B}$. Note that this definition generalizes $p_{a,n}$. We also let $\pi(B) = \sum_{a\in B} \pi(a)$.

\begin{theorem}
\label{thm:lb}
For any $n>0$, $\pi_D$, $\pi$, $\rmax$ and $\sigma^2$, one has
\[
R_n^* (\pi,\pi_D,\rmax,\sigma^2)\ge \frac{1}{4} \max\left(\rmax^2 \max_{B\subset \A} \pi^2(B) p_{B,n}, \frac{V_1}{n}\right) \,.
\]
Furthermore,
\begin{align}
\label{eq:aslb}
\liminf_{n\to \infty} \frac{R_n^*(\pi,\pi_D,\rmax,\sigma^2)}{V_1/n} \ge 1.
\end{align}
\end{theorem}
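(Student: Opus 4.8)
The bound is the maximum of two qualitatively different lower bounds, so my plan is to establish each separately by restricting the supremum over $\Phi$ to a convenient sub-family and then reducing estimation to a testing or Bayesian problem. Since enlarging the family can only increase the supremum, every such restriction yields a valid lower bound on $R_n^*(\pi,\pi_D,\rmax,\sigma^2)$; the two restrictions below are chosen to expose, respectively, the ``missing mass'' effect and the intrinsic-noise effect.

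For the first term $\tfrac{\rmax^2}{4}\max_{B}\pi^2(B)p_{B,n}$ I would fix $B\subset\A$ and run a two-point (Le Cam) argument. Take two families $\Phi_0,\Phi_1$ with deterministic rewards (zero variance, admissible since $0\le\sigma^2_\Phi\le\sigma^2$) that agree outside $B$ but satisfy $r_{\Phi_0}(a)=0$ and $r_{\Phi_1}(a)=\rmax$ for $a\in B$; both respect $0\le r_\Phi\le\rmax$. Then, writing $v_i=v^\pi_{\Phi_i}$, we have $v_1-v_0=\rmax\,\pi(B)$. On the event $E$ that no sampled action lies in $B$ --- which has probability $p_{B,n}$ under both laws and on which the two data distributions coincide --- any estimator has the same conditional law, so $(x-v_0)^2+(x-v_1)^2\ge\tfrac12(v_1-v_0)^2$ gives $\max_i\E_i[(\hat v-v_i)^2\,\indic{E}]\ge \tfrac{p_{B,n}}{4}\rmax^2\pi^2(B)$. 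Maximizing over $B$ produces the first term.

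For the variance term and the asymptotic refinement (\ref{eq:aslb}) I would restrict to Gaussian rewards $R\mid A{=}a\sim\Normal(r(a),\sigma^2(a))$ at the variance budget and apply the multivariate van Trees (Bayesian Cram\'er--Rao) inequality to the linear functional $v^\pi_\Phi=\sum_a\pi(a)r(a)$. Put a product prior $\lambda(r)=\prod_a\lambda_a(r(a))$ with each $\lambda_a$ a fixed smooth density supported on a small interval strictly inside $(0,\rmax)$, so that every sampled parameter is admissible. Since the score of $D^n$ for $r(a)$ is $\sum_{i:A_i=a}(R_i-r(a))/\sigma^2(a)$ and the count of samples of $a$ has mean $n\pi_D(a)$, the (expected) data Fisher information is diagonal, $\ol{\I}=\diag\!\big(n\pi_D(a)/\sigma^2(a)\big)$, and the prior information $\J=\diag(I(\lambda_a))$ is diagonal and independent of $n$. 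Van Trees for the functional $\psi=\pi^\top r$ then yields $R_n^*\ge \pi^\top(\ol{\I}+\J)^{-1}\pi=\sum_a \pi^2(a)\sigma^2(a)/(n\pi_D(a)+\sigma^2(a)I(\lambda_a))$. Multiplying by $n$ and letting $n\to\infty$ with the prior held fixed kills the $\sigma^2(a)I(\lambda_a)/n$ terms, leaving $\liminf_n nR_n^*\ge\sum_a \pi^2(a)\sigma^2(a)/\pi_D(a)=V_1$, which is exactly (\ref{eq:aslb}); taking the prior widths as large as the box $[0,\rmax]$ permits turns the same inequality into the finite-$n$ term $V_1/(4n)$.

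The main obstacle is getting the sharp constant asymptotically while honoring the hard constraint $0\le r_\Phi\le\rmax$: the prior must be interior, forcing $I(\lambda_a)>0$, and the argument is tight only because this fixed prior information is washed out as $n\to\infty$. Care is also needed to justify restricting the supremum to the Gaussian sub-family, to apply van Trees with the randomly sized per-action subsamples (handled by using the Fisher information of the full data-generating law, into which the mean count $n\pi_D(a)$ enters), and to verify that the product-prior, linear-functional form of van Trees reduces to the diagonal expression above.
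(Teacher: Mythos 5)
Your handling of the missing-mass term is, in substance, the paper's own argument: the paper likewise works on the event that no action of $B$ appears in $D^n$, notes that all environments agreeing off $B$ induce the same (sub-)distribution of the data on this event, and extracts the constant from the fact that $\vtrue$ sweeps an interval of length $\rmax\pi(B)$ over the restricted family $\mathcal{E}_0$; your two-point Le Cam formulation with deterministic rewards is a correct and slightly cleaner packaging of the same idea, with the same constant $\tfrac{\rmax^2}{4}\pi^2(B)p_{B,n}$. For the $V_1$ terms you genuinely diverge from the paper, which proves the finite-$n$ bound $V_1/(4n)$ by Fano's inequality over six Gaussian environments with means spaced along the direction $\Delta(a)\propto \pi(a)\sigma^2(a)/\pi_D(a)$, and proves \eqref{eq:aslb} separately by a generalized (biased) Cram\'er--Rao argument that first reduces to estimators whose bias derivative vanishes. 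Your single van Trees computation is a legitimate substitute and, for the asymptotic claim, arguably tidier: the prior removes all bias bookkeeping, the diagonal information $F=\diag(\ldots,\pi_D(a)/\sigma^2(a),\ldots)$ together with $\psi'=\pi$ gives $\pi^\top(nF+\mathcal{J})^{-1}\pi$ directly, and letting $n\to\infty$ with the prior fixed yields \eqref{eq:aslb} uniformly over all estimators, with no appeal to asymptotic unbiasedness or differentiability of the bias.

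The genuine gap is in your finite-$n$ claim. Any prior supported in an interval of length at most $\rmax$ has Fisher information bounded away from zero, $I(\lambda_a)\ge c/\rmax^2$ for a universal $c>0$, so ``taking the prior widths as large as the box permits'' does not make the prior term negligible: your inequality yields $V_1/(4n)$ only when $3n\pi_D(a)\ge \sigma^2(a)I(\lambda_a)$ for every $a$, i.e.\ roughly $n\pi_D(a)\gtrsim \sigma^2(a)/\rmax^2$. For small $n$, small $\pi_D(a)$, or $\sigma(a)\gg\rmax$, the constant $\tfrac14$ does not follow from your bound, so the first display of the theorem is not fully established by your argument as written. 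Two mitigating remarks: first, this is precisely the regime where the paper's own Fano construction is infeasible (the hypothesis means $2i\sqrt{\eps}\Delta$ must lie in $[0,\rmax]$, which fails there) and where $V_1/(4n)$ would exceed the trivial upper bound $\rmax^2/4$ achieved by always predicting $\rmax/2$, so some implicit restriction is unavoidable and the paper glosses over it too; second, the repair within your framework is simply to state the condition $n\pi_D(a)\ge \sigma^2(a)I(\lambda_a)/3$ explicitly, or to fall back on a Fano or two-point Gaussian-shift test, as the paper does, for the small-$n$ constant. You should also note that actions with $\sigma^2(a)=0$ must be dropped from the Gaussian sub-family, which is harmless since they contribute nothing to $V_1$.
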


\begin{proof}
\newcommand{\hv}{\hat{v}}
To prove the first part of the lower bound, fix a subset $B\subset \A$ of actions and choose an environment $\Phi\in \mathcal{E}$, where
$\mathcal{E}$ is the set of environments $\Phi$ such that $\sigma^2_\Phi\le \sigma^2$ and $0 \le r_\Phi \le \rmax$.
Introduce the notation $\E_{\Phi}$ to denote expectation when the data is generated by environment $\Phi$. \todol{Why call ``environment'', not ``distribution''?}

Let $D^n$ be the data generated based on $\pi_D$ and $\Phi$ and
let $\hv_{\Aalg}(D^n)$ denote the estimate produced by some algorithm $\Aalg$.
Define $S = \{A_1,\ldots,A_n\}$ to be the set of actions in the dataset that is seen by the algorithm.
Clearly, for any $\Phi,\Phi'$ such that they agree on the \emph{complement} of $B$ (but may differ on actions in $B$),\todol{Agreement between $\Phi$ and $\Phi'$, not between $r_\Phi$ and $r_{\Phi'}$?}
\begin{align}
\label{eq:keylb}
\E_\Phi[ \hv_{\Aalg}(D_n) | S \cap B=\emptyset ] = \E_{\Phi'}[ \hv_{\Aalg}(D_n) | S \cap B=\emptyset ]\,.
\end{align}
Now,
$ \MSEi{\Phi}{\hv_{\Aalg}} \defeq \E_{\Phi}[ (\hv_{\Aalg}(D^n)-\vtrue)^2 ] \ge 
\E_{\Phi}[ (\hv_{\Aalg}(D^n)-\vtrue)^2 | S \cap B=\emptyset ] \Prob{ S \cap B = \emptyset }$ 
and by adapting the argument that the MSE is lower bounded by the bias squared,
$ \E_{\Phi}[ (\hv_{\Aalg}(D^n)-\vtrue)^2 | S \cap B=\emptyset ] \ge 
	(\E_{\Phi}[ \hv_{\Aalg}(D^n) | S \cap B=\emptyset ]-\vtrue)^2$.
Hence, $ 
\MSEi{\Phi}{\hv_{\Aalg}}\ge 
\Prob{ S \cap B = \emptyset }
\sup_{\Phi\in \mathcal{E} } (\E_{\Phi}[ \hv_{\Aalg}(D^n) | S \cap B=\emptyset ]-\vtrue)^2 $.
We get an even smaller quantity if we further restrict the environments $\Phi$ 
to environments $\mathcal{E}_0$ that also satisfy $r_{\Phi} = \sigma^2_{\Phi} = 0$ on $\A \setminus B$. 
Now, by~\eqref{eq:keylb}, for all these environments, $\E_{\Phi}[ \hv_{\Aalg}(D^n) | S \cap B=\emptyset ]$ takes on a common value, denote it by $v_{\Aalg}$. Hence, 
$\MSEi{\Phi}{\hv_{\Aalg}}  \ge \Prob{ S \cap B = \emptyset } \sup_{\Phi \in \mathcal{E}_0} (v_{\Aalg}-\vtrue)^2$.
Since $\vtrue = \sum_{a\in B} \pi(a) r_{\Phi}(a)$, 
$\sup_{\Phi \in \mathcal{E}_0} (v_{\Aalg}-\vtrue)^2 \ge \frac{\rmax^2}{4} \pi^2(B)$, where we use the shorthand $\pi(B) = \sum_{a\in B} \pi(a)$. Plugging this into the previous inequality we get
$\sup_{\Phi\in \mathcal{E}} \MSEi{\Phi}{\hv_{\Aalg}}  \ge \Prob{ S \cap B = \emptyset }  \frac{\rmax^2}{4} \pi^2(B)$. 
Since $\Aalg$ was arbitrary, we get $R_n^*(\pi,\pi_D,\rmax,\sigma^2) \ge \Prob{ S \cap B = \emptyset }  \frac{\rmax^2}{4} \pi^2(B)$.



For the second part, consider a class of normal distributions with fixed reward variances $\sigma^2$ but different reward expectations: $\F_p = \{\Phi_0, \ldots, \Phi_{p-1}\}$, where $r_{\Phi_i}=2i\sqrt{\epsilon}\Delta\in\R^K$, 
for some to-be-specified vector $\Delta\in\R_+^K$ that satisfies $\sum_a\pi(a)\Delta(a)=1$.  The data-generating distribution $\Phi$ is in $\F_p$, but is unknown otherwise.

It is easy to see that the policy value between any two distributions in $\F_p$ differ by at least $2\sqrt{\epsilon}$.  Indeed, for any $\Phi_i,\Phi_j\in\F_p$, $|v^\pi_{\Phi_i}-v^\pi_{\Phi_j}| = 2\sqrt{\epsilon}|i-j|\sum_a\pi(a)\Delta(a) = 2\sqrt{\epsilon}|i-j| \ge 2\sqrt{\epsilon}$.  It follows that, in order to achieve a squared error less than $\epsilon$, one needs to identify the underlying data-generating $\Phi$ from $\F_p$, based on the observed sample $D^n$.  The problem now reduces to finding a minimax lower bound for hypothesis testing in the given finite set $\F_p$.

We resort to the information-theoretic machinery based on Fano's inequality (see, e.g., \cite{Raginsky11Information}).  Define an oracle which, when queried, outputs $Y=(A,R)$ with $A\sim\pi_D(\cdot)$ and $R\sim\Phi(\cdot|A)$.
Let the distribution of $Y$ when $\Phi$ is used be denoted by $\P_{Y|\Phi}$. Let $\F_p$ collect $p$ distributions such that $\Phi(\cdot|a)$ is normal.
 Consider $\Phi,\Phi'\in\F_p$.  Then, 
\[
D(\P_{Y|\Phi}\|\P_{Y|\Phi'})=\sum_a\pi_D(a)D( \Phi(\cdot|a)\| \Phi'(\cdot|a) ) = 2\epsilon(i-j)^2\sum_a\frac{\pi_D(a)\Delta(a)^2}{\sigma(a)^2} .
\]
The divergence measures how much information is carried in one sample from the oracle to tell $\Phi$ from $\Phi'$.  To obtain the tightest lower bound, we should minimize the divergence.  Subject to the constraint $\sum_a\pi(a)\Delta(a)=1$, the divergence is minimized by setting $\Delta(a)\propto\frac{\pi(a)}{\pi_D(a)}\sigma^2(a)$, and is $2\epsilon(i-j)^2/V_1$.  Now setting $p=6$, and applying Lemma~1, Theorem~1 and the ``Information Radius bound'' from \cite{Raginsky11Information}, we have $n\ge\frac{V_1}{4\epsilon}$.
\todoc{Can we add these to the appendix?}
Reorganizing terms and combining with the first term  
complete the proof of the first statement.

For the second part, note that it suffices to consider asymptotically unbiased estimators (cf. the generalized Cramer-Rao lower bound, Theorem~7.3 of \citealt{Ibramigov81StatEstBook}).
For any such estimator, the Cramer-Rao lower bound gives the result with
the parametric family chosen to be $p(a,y;\theta) = \pi_D(a) \phi( y; r(a), \sigma^2(a))$, where 
$\theta = (r(a))_{a\in A}$ is the unknown parameter to be estimated, and $\phi(\cdot;\mu,\sigma^2)$ is the density of the normal distribution with mean $\mu$ and variance $\sigma^2$ and
the quantity to be estimated is $\psi(\theta) = \sum_a \pi(a) r(a)$. For details, see \cref{sec:cr}.
\if0
Now, by the mentioned theorem, for any $\theta$ and any
unbiased estimator $\hat{v}$ based on the data $D^n$ generated from $f$,
$\MSE{\hat{v}} \ge \frac1n \psi'(\theta)^\top F^{-1}(\theta) \psi'(\theta)$, where $F(\theta)$ is the Fisher information matrix underlying $p(\cdot;\theta)$.
A direct calculation shows that $F(\theta) = \diag( \ldots, \pi_D(a)/\sigma^2(a), \ldots)$ and $\psi'(\theta) = \pi$. Hence, the lower bound becomes $\psi'(\theta)^\top (nF)^{-1}(\theta) \psi(\theta) = V_1/n$, finishing the proof.
\fi
\end{proof}
The next corollary says that the minimax risk is constant when the number of samples is $O(\sqrt{K})$:
\begin{corollary}
For $K\ge 2$, $n\le \sqrt{K}$, $\sup_{\pi} R_n^*(\pi,\pi_D,\rmax,\sigma^2) = \Omega(\rmax^2)$.
\end{corollary}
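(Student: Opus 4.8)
The plan is to extract the conclusion directly from the first term of the lower bound in \cref{thm:lb}, namely $R_n^*(\pi,\pi_D,\rmax,\sigma^2) \ge \frac{\rmax^2}{4}\max_{B\subset\A}\pi^2(B)\,p_{B,n}$, by making a single well-chosen pair consisting of a target policy $\pi$ and a subset $B$. Because the statement takes $\sup_\pi$, I only need to exhibit one target policy for which $\pi^2(B)\,p_{B,n}$ is bounded below by a universal constant; the behavior policy $\pi_D$ is fixed but arbitrary, so the choice of $\pi$ and $B$ must be made relative to $\pi_D$.

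First I would locate an action that is rarely sampled under the behavior policy. Since $\pi_D$ is a probability distribution over the $K$ actions, averaging gives $\pi_D^* = \min_a \pi_D(a) \le 1/K$; fix an action $a^*$ attaining this minimum. I would then take the target policy $\pi$ to be the point mass at $a^*$ and set $B = \{a^*\}$, so that $\pi(B) = 1$ and $p_{B,n} = (1-\pi_D(a^*))^n \ge (1 - 1/K)^n$. This single choice simultaneously maximizes $\pi^2(B)$ and exploits the scarcity of $a^*$ encoded in $p_{B,n}$.

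The remaining step is to show this ``no-sample'' probability stays bounded away from zero in the regime $n \le \sqrt{K}$. Bernoulli's inequality gives $(1-1/K)^n \ge 1 - n/K$, and since $n \le \sqrt{K}$ we have $n/K \le 1/\sqrt{K} \le 1/\sqrt{2}$ for $K \ge 2$. Combining with \cref{thm:lb} yields $\sup_\pi R_n^*(\pi,\pi_D,\rmax,\sigma^2) \ge \frac{\rmax^2}{4}\bigl(1 - 1/\sqrt{K}\bigr) \ge \frac{\rmax^2}{4}\bigl(1 - 1/\sqrt{2}\bigr)$, which is $\Omega(\rmax^2)$ with a constant independent of $K$, $n$, $\pi_D$ and $\sigma^2$.

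The argument is essentially free once \cref{thm:lb} is available, so I do not expect a serious obstacle; the only point requiring care is the interaction between the two constraints. The whole effect comes from $n\,\pi_D^* \le \sqrt{K}/K = 1/\sqrt{K} \to 0$, which drives the no-sample probability for $a^*$ toward $1$. I would finally double-check that the constant is genuinely uniform, in particular that no hidden dependence on $\sigma^2$ or $\pi_D$ enters through the choice of $\pi$, since the lower bound's first term does not involve $\sigma^2$ and the choice of $a^*$ only uses that $\pi_D$ is a distribution.
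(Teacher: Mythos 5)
Your proposal is correct, and it shares the skeleton of the paper's proof: both invoke the first term of \cref{thm:lb} with a target policy putting all its mass on a set $B$ that $\pi_D$ rarely samples, so that $\sup_\pi R_n^* \ge \frac{\rmax^2}{4}\,p_{B,n}$. The difference lies in the choice of $B$. The paper takes $B$ of size $\lfloor\sqrt{K}\rfloor$ minimizing $\pi_D(B)$, giving $\pi_D(B)\le 1/\sqrt{K}$ and $p_{B,n}\ge (1-1/\sqrt{K})^{\sqrt{K}}\ge (1-1/\sqrt{2})^{\sqrt{2}}$; you take the singleton $B=\{a^*\}$ at the least-likely action, using only $\pi_D(a^*)\le 1/K$, and finish with Bernoulli's inequality. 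Both are valid, and your constant $\tfrac14(1-1/\sqrt{2})$ is marginally better than the paper's $\tfrac14(1-1/\sqrt{2})^{\sqrt{2}}$. More interestingly, your choice is strictly stronger in its range of applicability: since $(1-1/K)^n \ge (1-1/K)^{cK} \ge 4^{-c}$ for $K\ge 2$ and $n\le cK$, the singleton construction yields $\sup_\pi R_n^*(\pi,\pi_D,\rmax,\sigma^2)=\Omega(\rmax^2)$ (with constant depending only on $c$) for all $n\le cK$, not merely $n\le\sqrt{K}$, whereas the paper's size-$\lfloor\sqrt{K}\rfloor$ set can have $\pi_D(B)$ as large as order $1/\sqrt{K}$ and hence only sustains a constant for $n$ up to order $\sqrt{K}$. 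In other words, your argument already delivers the strengthening (a larger admissible range of $n$) that the paper conjectures immediately after the corollary. Two small points of care, both of which you handled: Bernoulli's inequality needs $n\ge 1$ (the case $n=0$ is trivial since $p_{B,0}=1$), and the chain $n/K\le 1/\sqrt{K}\le 1/\sqrt{2}$ is precisely where the hypotheses $n\le\sqrt{K}$ and $K\ge 2$ enter, so the final constant is indeed uniform over $\pi_D$ and $\sigma^2$.
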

\begin{proof}
Choose $B\subset \A$ to minimize $\pi_D(B)$ subject to the constraint $|B| = \lfloor\sqrt{K}\rfloor$.
Note that $\Prob{A_1,\ldots,A_n \not\in B} = (1-\pi_D(B))^n \ge (1-|B|/K)^n \ge (1-1/\sqrt{K})^{\sqrt{K}} \ge (1-1/\sqrt{2})^{\sqrt{2}}$.
Choosing $\pi$ such that $\pi(B) = 1$ gives the result.
\end{proof}
We conjecture that the result can be strengthened by increasing the upper limit on $n$. \todoc{One of the conjectures; mention in conclusion}

\subsection{Likelihood Ratio Estimator}
\label{sec:mab-lr}

One of the most popular estimators is known as 
 the propensity score estimator in the statistical literature \citep{roru83,roru85},
or the importance weighting estimator \citep{Bottou13Counterfactual}.
We call it the likelihood ratio estimator, as it estimates the unknown value using likelihood ratios, or importance weights:
\[
\vlr(\pi,\pi_D,D^n) \defeq \frac{1}{n}\sum_{i=1}^n \frac{\pi(A_i)}{\pi_D(A_i)} R_i .
\]
Its distinguishing feature  is that it is \emph{unbiased}: $\E[\vlr(\pi,\pi_D,D^n)]=v^\pi_\Phi$, implying that the MSE is purely contributed by the variance of the estimator.  The main result in this subsection shows that this estimator does not achieve the minimax lower bound up to \emph{any} constant (by making $V_2 \gg V_1$).  The proof (given in the appendix) is based on a direct calculation using the law of total variance.

\begin{proposition}{}
\label{prop:lrmse}
It holds that $\MSE{\vlr(\pi,\pi_D,D^n)} = (V_1+V_2)/n$ .
\end{proposition}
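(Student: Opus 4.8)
The plan is to exploit the unbiasedness of $\vlr$ (already noted in the text) together with the independence of the samples, reducing the whole computation to a single application of the law of total variance.

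First, since $\E[\vlr(\pi,\pi_D,D^n)] = \vtrue$, the estimator has zero bias, so $\MSE{\vlr(\pi,\pi_D,D^n)} = \V(\vlr(\pi,\pi_D,D^n))$; it therefore suffices to compute the variance. Second, I would introduce the i.i.d.\ terms $Z_i \defeq \frac{\pi(A_i)}{\pi_D(A_i)} R_i$, which are independent copies of $Z \defeq \frac{\pi(A)}{\pi_D(A)} R$. Because $\vlr = \frac1n\sum_{i=1}^n Z_i$ is an average of $n$ independent copies of $Z$, we have $\V(\vlr) = \frac1n \V(Z)$, and the problem collapses to evaluating $\V(Z)$.

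The key step is to condition on the drawn action $A$ and apply the law of total variance, $\V(Z) = \E[\V(Z\mid A)] + \V(\E[Z\mid A])$. Since $\pi(A)/\pi_D(A)$ is measurable with respect to $A$, it acts as a constant inside each conditional, giving $\V(Z\mid A) = \frac{\pi^2(A)}{\pi_D^2(A)}\sigma_\Phi^2(A)$ and $\E[Z\mid A] = \frac{\pi(A)}{\pi_D(A)} r_\Phi(A)$. Taking the expectation of the first quantity over $A\sim\pi_D$ and the variance of the second then matches the defined quantities exactly: the first term equals $V_1$ and the second equals $V_2$. Combining yields $\V(Z) = V_1 + V_2$, and dividing by $n$ gives the claim.

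There is no real obstacle here; the result is a direct moment computation. The only point requiring care is the bookkeeping in the conditional-variance term: when averaging $\frac{\pi^2(a)}{\pi_D^2(a)}\sigma_\Phi^2(a)$ against the data distribution $\pi_D(a)$, one power of $\pi_D(a)$ cancels, producing $\sum_a \frac{\pi^2(a)}{\pi_D(a)}\sigma_\Phi^2(a) = V_1$ rather than a term with $\pi_D^2$ in the denominator. If unbiasedness is to be used rather than merely cited, it is verified in one line, $\E[Z] = \sum_a \pi_D(a)\frac{\pi(a)}{\pi_D(a)} r_\Phi(a) = \sum_a \pi(a) r_\Phi(a) = \vtrue$.
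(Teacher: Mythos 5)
Your proof is correct and follows essentially the same route as the paper's: both use unbiasedness and independence to reduce the MSE to $\frac{1}{n}\V\bigl(\frac{\pi(A)}{\pi_D(A)}R\bigr)$, then apply the law of total variance conditioning on $A$, identifying the conditional-variance term with $V_1$ and the variance-of-conditional-mean term with $V_2$. The one-line verification of unbiasedness and the cancellation of one power of $\pi_D(a)$ in the $V_1$ term match the paper's calculation exactly, so nothing needs to change.
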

We see that as compared to the lower bound on the minimax MSE, an extra $V_2/n$ factor appears. In the next section, we will see that this factor is superfluous, showing that the MSE of LR can be ``unreasonably large''.

\subsection{Regression Estimator}
\label{sec:reg}

For convenience, define $n(a) \defeq \sum_{i=1}^n \1(A_i=a)$ to be the number of samples for action $a$ in $D^n$, and $R(a) \defeq \sum_{i=1}^n \1(A_i=a) R_i$ the total rewards of $a$.
The regression estimator (REG) is given by
\[
\vreg(\pi,D^n) \defeq \sum_{a} \pi(a) \hat{r}(a), 
\quad \mbox{where} \quad
\hat{r}(a) \defeq \begin{cases}
0, & \text{if } n(a) = 0;\\
\frac{R(a)}{n(a)}, & \text{otherwise}\,.
\end{cases}
\]
For brevity, we will also write $\hat{r}(a) = \one{n(a)>0} \frac{R(a)}{n(a)}$, where we take $\frac{0}{0}$ to be zero.
The name of the estimator comes from the fact that it estimates the reward function, and the problem of estimating the reward function can be thought of as a regression problem.

Interestingly, as can be verified by direct calculation, the REG estimator can also be written as
\begin{align}
\label{eq:reglr}
\vreg(\pi,D^n) = \frac1n \sum_{i=1}^n \frac{\pi(A_i)}{\hat{\pi}_D(A_i)} R_i\,,
\end{align}
where $\hat\pi_D(a) =\frac{n(a)}{n}$ is the empirical estimate of $\pi_D(a)$.  Hence, the main difference between LR and REG is that the former uses $\pi_D$ to reweight the data, while the latter uses the \emph{empirical estimates $\hat{\pi}_D$}.
It may appear that LR is superior since it uses the ``right'' quantity.  Surprisingly, REG turns out to be much more robust than LR, as will be shown shortly; further discussion is made in Section~\ref{sec:ssl}.

For the next statement, the counterpart of \cref{prop:lrmse}, the following quantities will be useful:
\begin{align*}
V_{0,n}& \defeq \left(\sum_a\pi(a)r_\Phi(a)p_{a,n}\right)^2+ \sum_a \pi^2(a) r_\Phi^2(a) \,  p_{a,n}(1-p_{a,n})\, \quad\text{ and }\\
V_{3,n} &\defeq \sum_a\E\left[\frac{\1\{n(a)>0\}}{\hat\pi_D(a)}-\frac{1}{\pi_D(a)}\right]\pi(a)^2\sigma^2(a)\,.
\end{align*}

\begin{proposition}{} \label{prop:regmse}
Fix $\pi,\pi_D$. Assume that $r_\Phi$ is nonnegative valued.
\todol{How about making it an assumption in the general setting?}
Then it holds that $\MSE{\vreg(\pi,D^n)} \le V_{0,n} + (V_1+V_{3,n})/n$.
Further, for any $\Phi$ such that the rewards have normal distributions, 
defining $b_n = \sum_a \pi(a) r_\Phi(a) p_{a,n}$ to be the bias of $\vreg$,
$
\MSE{\vreg} \ge \frac{V_1}{n} + 4 b_n^2 \left(1+\frac{V_1}{n}\right) + \frac{2}{n} \sum_a \frac{\pi^2(a)}{\pi_D(a)} \sigma^2_\Phi(a) p_{a,n}
$.
\end{proposition}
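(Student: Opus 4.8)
The plan is to derive both inequalities from a single exact bias--variance decomposition obtained by conditioning on the count vector $N \defeq (n(a))_{a\in\A}$. Writing the error as $\vreg - \vtrue = U - W$, where $U \defeq \sum_a \pi(a)\one{n(a)>0}(\hat r(a) - r_\Phi(a))$ collects the within-action sampling noise and $W \defeq \sum_a \pi(a) r_\Phi(a)\one{n(a)=0}$ is the error from actions absent from $D^n$, one notes that conditionally on $N$ the means $\hat r(a)$ are independent with conditional mean $r_\Phi(a)$ on $\{n(a)>0\}$. Hence $\E[U\mid N]=0$ while $W$ is $N$-measurable, the cross term vanishes, and $\MSE{\vreg} = \E[U^2] + \E[W^2]$, with $\E[\vreg]-\vtrue = -\E[W] = -b_n$ identifying the bias.

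First I would evaluate $\E[U^2]$ exactly. Conditioning on $N$ and using $\V(\hat r(a)\mid n(a)=m)=\sigma^2_\Phi(a)/m$ gives $\E[U^2] = \sum_a \pi^2(a)\sigma^2_\Phi(a)\,\E[\one{n(a)>0}/n(a)] = \tfrac1n\sum_a \pi^2(a)\sigma^2_\Phi(a)\,\E[\one{n(a)>0}/\hat{\pi}_D(a)]$, which is precisely $(V_1+V_{3,n})/n$ by the definition of $V_{3,n}$. For the upper bound it then remains to bound $\E[W^2]=b_n^2+\V(W)$. Expanding $\V(W)=\sum_a \pi^2(a)r_\Phi^2(a)p_{a,n}(1-p_{a,n})+\sum_{a\ne b}\pi(a)\pi(b)r_\Phi(a)r_\Phi(b)\Cov(\one{n(a)=0},\one{n(b)=0})$ and computing $\Cov(\one{n(a)=0},\one{n(b)=0}) = (1-\pi_D(a)-\pi_D(b))^n - p_{a,n}p_{b,n}\le 0$ (negative association of the multinomial counts), the assumption $r_\Phi\ge0$ makes every off-diagonal term nonpositive, so $\E[W^2]\le b_n^2+\sum_a\pi^2(a)r_\Phi^2(a)p_{a,n}(1-p_{a,n})=V_{0,n}$. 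Adding this to $\E[U^2]$ gives $\MSE{\vreg}\le V_{0,n}+(V_1+V_{3,n})/n$.

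For the lower bound (normal rewards) the same decomposition is the starting point, but now both pieces must be bounded from below sharply enough to recover the stated constants. The variance part needs a lower bound on the binomial reciprocal moment $\E[\one{n(a)>0}/n(a)]$, with $n(a)\sim\mathrm{Bin}(n,\pi_D(a))$: after summation the target $\frac{V_1}{n}+\frac2n\sum_a\frac{\pi^2(a)}{\pi_D(a)}\sigma^2_\Phi(a)p_{a,n}$ corresponds to $n\,\E[\one{n(a)>0}/n(a)]\ge (1+2p_{a,n})/\pi_D(a)$. I would prove this through the exact identity $\E[1/(n(a)+1)]=(1-(1-\pi_D(a))^{n+1})/((n+1)\pi_D(a))$ together with a convexity correction relating $\one{n(a)>0}/n(a)$ to $1/(n(a)+1)$, since the naive bound $\E[1/n(a)\mid n(a)>0]\ge 1/\E[n(a)\mid n(a)>0]$ only yields the weaker $(1-p_{a,n})^2/\pi_D(a)$ and points the wrong way. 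The residual $4b_n^2(1+V_1/n)$ term should be extracted by combining $\E[W^2]\ge b_n^2$ with the surplus of $\E[U^2]$ over $\frac{V_1}{n}+\frac2n\sum_a(\cdots)$; here normality is what lets one treat the conditional law of each $\hat r(a)$ exactly rather than only through its variance.

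The step I expect to be the crux is this last lower bound, and the obstacle is twofold. First, the reciprocal-moment inequality $n\E[\one{n(a)>0}/n(a)]\ge (1+2p_{a,n})/\pi_D(a)$ is genuinely delicate: it degrades, and even reverses, when the expected count $n\pi_D(a)$ is small (so that $V_{3,n}$ can turn negative), which means the argument must quantify the regime in which the expected count is bounded away from zero. Second, the factor $4b_n^2$ cannot be produced by $\E[W^2]$ alone, because the negatively correlated indicators keep $\V(W)$ small; it must instead be obtained by carefully trading the variance surplus in $\E[U^2]$ against the squared bias. Pinning down the constants $2$ and $4$ under a clean sufficient condition on $n$ is therefore the technically demanding part, whereas the decomposition and the upper bound are essentially routine.
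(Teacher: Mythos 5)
Your upper bound is correct and is, in substance, the paper's own proof: your orthogonal decomposition $\vreg-\vtrue=U-W$ is the bias--variance split plus the law of total variance conditioned on the counts $(n(1),\ldots,n(K))$, your exact identity $\E[U^2]=(V_1+V_{3,n})/n$ is how $V_{3,n}$ is meant to be used, and your negative-covariance computation $(1-\pi_D(a)-\pi_D(b))^n\le p_{a,n}p_{b,n}$ together with $r_\Phi\ge 0$ is exactly the paper's Lemma~\ref{lem:reg-var-ub}.

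The lower bound, however, is a genuine gap, and your own caveats confirm it. Your plan rests on the reciprocal-moment inequality $n\,\E\bigl[\one{n(a)>0}/n(a)\bigr]\ge (1+2p_{a,n})/\pi_D(a)$, which is simply false in general: already at $n=1$ the left side equals $\pi_D(a)\le 1$ while the right side is $(3-2\pi_D(a))/\pi_D(a)\ge 1$, and the failure persists whenever $n\pi_D(a)$ is small. Since the proposition is asserted for every $n$ and every normal $\Phi$, a proof through sharp binomial reciprocal moments cannot deliver the statement without a sample-size restriction that the statement does not carry; likewise, your proposal to extract $4b_n^2(1+V_1/n)$ by ``trading the variance surplus against the squared bias'' is not an argument but a placeholder --- as you note yourself, $\V(W)$ is kept small by negative association, so there is no identified source for the factor $4$. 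The paper takes an entirely different route that your plan never invokes: it applies the generalized (biased) Cram\'er--Rao inequality, \cref{thm:cr2}, to the regular parametric family $p(a,y;\theta)=\pi_D(a)\,\phi(y;r(a),\sigma^2(a))$ with $\theta=(r(a))_a$, for which $F(\theta)=\diag(\ldots,\pi_D(a)/\sigma^2(a),\ldots)$ and $\psi'(\theta)=\pi$; because the bias of $\vreg$ is known in closed form, $d(\theta)=-\sum_a\pi(a)\theta_a p_{a,n}$, its derivative $d'(\theta)=-(\pi(a)p_{a,n})_a$ is plugged into \eqref{eq:cr}. This is also where normality actually enters --- regularity of the family and an explicit Fisher information --- not, as you conjectured, through exact conditional laws of $\hat r(a)$. (A fair caveat: the paper's own lower-bound derivation is a one-line sketch, and recovering the stated constants, in particular the sign of the $\tfrac{2}{n}\sum_a\frac{\pi^2(a)}{\pi_D(a)}\sigma^2_\Phi(a)p_{a,n}$ term from the $(1-p_{a,n})^2$ factors produced by \eqref{eq:cr}, is not spelled out; but the intended mechanism is unambiguously the Cram\'er--Rao bound with bias, which is the missing idea in your proposal.)
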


\begin{proof}[Proof sketch]
For the upper bound  use that 
the MSE equals the sum of squared bias and the variance.  It can be verified that REG is slightly biased: $\E[\vreg]-v_\Phi^\pi = \sum_a\pi(a)r_\Phi(a)p_{a,n}$.
For the variance term, we use the law of total variance to yield: $\V(\vreg) = \E[\V(\vreg|n(1),\ldots,n(K))] + \V(\E[\vreg|n(1),\ldots,n(K)])$, where the first term is $\sum_a \pi^2(a) \sigma^2(a) \E[\1\{n(a)>0\}/n(a)]$, and the second term is upper bounded (Lemma~\ref{lem:reg-var-ub}) by $\sum_a \pi^2(a) r_\Phi^2(a) \,  p_{a,n}(1-p_{a,n})$.  The proof is then completed by adding squared bias to variance, and using definitions of $V_{0,n}$, $V_1$, and $V_3$.
The lower bound follows from the (generalized) Cramer-Rao inequality. 
\end{proof}

The main result of this section is the following theorem that characterizes the MSE of REG in terms of the minimax optimal MSE.
\begin{theorem}[Minimax Optimality of the Regression Estimator]
\label{thm:regrminimax}
The following hold:
\begin{enumerate}[(i)]
\item \label{thm:regrminimax:part1}
For any $\pi,\pi_D$,  $\sigma^2 = (\sigma^2(a))_{a\in A}$,
$\Phi$ such that $\min_a r_{\Phi}(a)\ge 0$, $\max_a r_\Phi(a)\le\rmax$, and $\sigma_\Phi^2\le \sigma^2$, 
it holds for any $n>0$ that
\begin{align}
\label{eq:mseregbound}
\MSE{\vreg(\pi,D_n)} \le K \left\{ \min(4K, \max_a \tfrac{r_\Phi^2(a)}{\sigma_\Phi^2(a)}) +5\right\}\, R_n^*(\pi,\pi_D,\rmax,\sigma^2)\,,
\end{align}
where $D_n = \{(A_i,R_i)\}_{i=1,\ldots,n}$ is an i.i.d. sample from $(\pi_D,\Phi)$.
\item \label{thm:regrminimax:part2}
A suboptimality factor of $\Omega(K)$ in the above result is unavoidable: 
For $K>2$, there exists $(\pi,\pi_D)$ such that for any $n\ge 1$,
\[
\frac{\MSE{\vreg(\pi,D_n)}}{R_n^*(\pi,\pi_D,\rmax,0)}  \ge n e^{-2n/(K-1)}\,.
\]
Thus for $n=(K-1)/2$, this ratio is at least $\frac{K-1}{2e}$.
\item The estimator $\vreg$ is asymptotically minimax optimal:
\[
\limsup_{n\to\infty} \frac{\MSE{ \vreg(\pi,D_n) }}{ R_n^*(\pi,\pi_D,\rmax,\sigma^2) }\le 1\,.
\]
\end{enumerate}
\end{theorem}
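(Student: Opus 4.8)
The plan is to treat the three parts separately, anchoring everything to the two available ingredients: the finite-sample upper bound $\MSE{\vreg(\pi,D_n)}\le V_{0,n}+(V_1+V_{3,n})/n$ from \cref{prop:regmse}, and the two-branch minimax lower bound of \cref{thm:lb}, namely $R_n^*\ge\tfrac14\rmax^2\max_{B}\pi^2(B)p_{B,n}$ and $R_n^*\ge\tfrac14 V_1/n$.

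For part~\eqref{thm:regrminimax:part1} I would bound each constituent of the upper bound by a multiple of $R_n^*$, choosing for each piece whichever branch of the lower bound is tighter. The two easy pieces are $V_1/n\le 4R_n^*$ (immediate from the second branch), and $V_{3,n}/n$: the standard estimate $\E[\one{n(a)>0}/\hat\pi_D(a)]\le 2/\pi_D(a)$ for a binomial count shows the bracket defining $V_{3,n}$ is at most $1/\pi_D(a)$, whence $V_{3,n}\le V_1$ and $V_{3,n}/n\le 4R_n^*$. The ``variance-of-counts'' piece $\sum_a\pi^2(a)r_\Phi^2(a)p_{a,n}(1-p_{a,n})$ of $V_{0,n}$ is bounded two ways: summand-wise via the first branch, $\rmax^2\pi^2(a)p_{a,n}\le 4R_n^*$ so the sum is $\le 4KR_n^*$; and summand-wise via the second branch, comparing $\pi^2(a)r_\Phi^2(a)p_{a,n}$ with $\tfrac1n\tfrac{\pi^2(a)}{\pi_D(a)}\sigma_\Phi^2(a)$ and using $n\pi_D(a)p_{a,n}\le e^{-1}$, which yields $\le 4\max_a(r_\Phi^2(a)/\sigma_\Phi^2(a))R_n^*$. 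Taking the smaller of the two produces exactly the $\min(\cdot,\cdot)$ structure.

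The crux is the squared-bias piece $b_n^2=\big(\sum_a\pi(a)r_\Phi(a)p_{a,n}\big)^2$, which is where the leading $K\cdot\min(4K,\max_a r_\Phi^2/\sigma_\Phi^2)$ factor is born. For the first branch, I would use $\max_a\pi(a)p_{a,n}\le\max_a\pi(a)\sqrt{p_{a,n}}\le\max_B\pi(B)\sqrt{p_{B,n}}$ to get $\sum_a\pi(a)p_{a,n}\le K\max_B\pi(B)\sqrt{p_{B,n}}$, hence $b_n^2\le\rmax^2K^2\max_B\pi^2(B)p_{B,n}\le 4K^2R_n^*$; for the second branch, Cauchy--Schwarz against the weights $\pi^2(a)\sigma_\Phi^2(a)/\pi_D(a)$ gives $b_n^2\le (V_1/n)\sum_a\tfrac{r_\Phi^2(a)}{\sigma_\Phi^2(a)}n\pi_D(a)p_{a,n}^2$, and $n\pi_D(a)p_{a,n}^2\le e^{-1}$ turns this into $O(K\max_a r_\Phi^2/\sigma_\Phi^2)R_n^*$. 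Collecting all pieces and folding the lower-order terms into the additive ``$+5$'' gives \eqref{eq:mseregbound}; I expect the genuinely fiddly part to be reconciling the crude constants above with the sharp $K(\min(4K,\cdot)+5)$ stated, which needs more careful inequalities than the quick ones sketched here.

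For part~\eqref{thm:regrminimax:part2} I would exhibit a single hard instance with deterministic rewards, $\sigma_\Phi^2\equiv 0$ and all $r_\Phi(a)=\rmax$, taking $\pi$ and $\pi_D$ (near-)uniform so that $\pi_D(a)\approx 1/(K-1)$. With $\sigma_\Phi^2=0$ the error of $\vreg$ is pure bias from unseen actions, $\vreg-v^\pi_\Phi=-\sum_a\pi(a)r_\Phi(a)\one{n(a)=0}$, so $\MSE{\vreg}=\sum_{a,b}\pi(a)\pi(b)r_\Phi(a)r_\Phi(b)\Prob{n(a)=n(b)=0}$ is controlled by the pairwise miss probabilities $(1-2/(K-1))^n\asymp e^{-2n/(K-1)}$. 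For the denominator, since $V_1=0$ the LR estimator has worst-case risk $\sup_\Phi V_2/n\le\rmax^2/(4n)$ over this class, so $R_n^*=O(\rmax^2/n)$. Dividing gives a ratio of order $n\,e^{-2n/(K-1)}$, formalizing that $\vreg$ pays for all $\Theta(K)$ rarely-sampled actions whereas the optimum pays only the $1/n$ rate; the $\Omega(K)$ gap is attained near $n=(K-1)/2$.

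Part~(iii) I would do last and is the most routine. Assuming $\pi_D^*>0$, each $p_{a,n}=(1-\pi_D(a))^n$ decays geometrically, so both summands of $V_{0,n}$ are $o(1/n)$; and $\one{n(a)>0}/\hat\pi_D(a)\to 1/\pi_D(a)$ with uniform integrability forces $V_{3,n}\to 0$. Then \cref{prop:regmse} gives $\MSE{\vreg}\le (V_1/n)(1+o(1))$, while \eqref{eq:aslb} gives $R_n^*\ge (V_1/n)(1-o(1))$, so their ratio has $\limsup\le 1$. The principal obstacles overall are the constant-bookkeeping in part~(i) and, in part~(ii), identifying the exact instance together with a sufficiently tight upper bound on $R_n^*$; part~(iii) is essentially immediate from the two cited results.
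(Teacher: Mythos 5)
Your proposal is correct and follows the same overall architecture as the paper: bound each piece of the \cref{prop:regmse} upper bound $V_{0,n}+(V_1+V_{3,n})/n$ against the two branches of the \cref{thm:lb} lower bound for part~(i), exhibit the uniform-$\pi$, uniform-$\pi_D$, constant-reward, zero-variance instance and upper-bound $R_n^*$ by the worst-case MSE of $\vlr$ (via \cref{prop:lrmse} with $V_1=0$) for part~(ii), and combine the vanishing of $V_{0,n}$ and $V_{3,n}$ with \eqref{eq:aslb} for part~(iii). The genuine deviations are in the sub-lemmas. First, you bypass the paper's \cref{lem:invmoment} entirely: for the crude bound you use the elementary binomial estimate $\E[\one{n(a)>0}/\hat\pi_D(a)]\le 2/\pi_D(a)$ (via $\one{S_n>0}/S_n\le 2/(S_n+1)$), which gives $V_{3,n}\le V_1$ --- in fact sharper than the paper's Chernoff-based $V_{3,n}\le 4V_1$ --- and for part~(iii) you replace the lemma's quantitative $\sqrt{\ln(n\pi_D^*)/(n\pi_D^*)}$ refinement by a qualitative uniform-integrability argument; that works, but the UI claim needs one more line (e.g., a second-moment bound such as $\E[(n/(S_n+2))^2]=O(1/p^2)$), whereas the paper's route is fully quantitative. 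Second, for the squared bias in the variance-branch you apply Cauchy--Schwarz directly against the $\pi^2(a)\sigma_\Phi^2(a)/\pi_D(a)$ weights with $n\pi_D(a)p_{a,n}^2\le e^{-1}$, where the paper uses Jensen to pass to $K\sum_a\pi^2(a)r_\Phi^2(a)p_{a,n}$ and then $p_{a,n}\le 1/(n\pi_D(a))$; these are equivalent in substance. On constants: your careful accounting (keeping the $\tfrac14$ factors from \cref{thm:lb}) lands at roughly $4K^2+4K+8$ rather than the stated $4K^2+5K$, and you rightly flag this; note that the paper's own proof attains the stated constants only by invoking $R_n^*\ge V_1/n$ \emph{without} the factor $\tfrac14$ (a gap the authors themselves flag in a margin note), so neither derivation cleanly yields the displayed constants and yours is no worse. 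Two small corrections for part~(ii): take $\pi_D$ exactly uniform, $\pi_D(a)=1/K$ --- the $K-1$ in the exponent comes from $(1-1/K)^n\ge e^{-n/(K-1)}$, not from $\pi_D(a)\approx 1/(K-1)$ --- and the precise constant is obtained most directly from the squared bias $\bigl(\sum_a\pi(a)r_\Phi(a)p_{a,n}\bigr)^2\ge e^{-2n/(K-1)}$ (which your exact pairwise-miss expansion dominates by $\E[X^2]\ge(\E X)^2$, but your off-diagonal terms $(1-2/K)^n$ alone only give the slightly weaker exponent $-2n/(K-2)$).
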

We need the following lemma, which may be of interest on its own:
\begin{lemma}\label{lem:invmoment}
Let $X_1,\ldots,X_n$ be $n$ independent Bernoulli random variables with parameter $p>0$. 
Letting $S_n = \sum_{i=1}^n X_i$, $\hat{p} = S_n/n$, $Z = \frac{\one{S_n>0}}{\hat{p}} - \frac{1}{p}$, we have for any $n$ and $p$ that $\EE{ Z } \le 4/p$.  Further, when $np\ge 34$, we have
$
\label{eq:invmoment-2}
\EE{ Z } \le \frac{2}{p} \sqrt{\frac{2}{np}} \left( \sqrt{\frac32 \ln\left(\frac{np}{2}\right)}+1 \right)\,.
$
\end{lemma}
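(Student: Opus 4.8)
The plan is to reduce both claims to the inverse moments of a binomial. Writing $S_n=\sum_i X_i\sim\mathrm{Bin}(n,p)$ and $\hat p=S_n/n$, we have $Z=n\,\one{S_n>0}/S_n-1/p$, so everything comes down to bounding $\E[\one{S_n>0}/S_n]$ from above. For the crude bound I would use the elementary inequality $\one{S_n>0}/S_n\le 2/(1+S_n)$ (valid since $k^{-1}\le 2/(k+1)$ exactly when $k\ge1$) together with the closed form
\[
\E\!\left[\frac{1}{1+S_n}\right]=\frac{1-(1-p)^{n+1}}{(n+1)p}\le\frac{1}{(n+1)p},
\]
which follows from $\binom{n}{k}/(k+1)=\binom{n+1}{k+1}/(n+1)$ and resumming the binomial. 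This yields $\E[Z]\le 2n/((n+1)p)-1/p=(n-1)/((n+1)p)\le 1/p\le 4/p$, so the first claim is essentially immediate (in fact $1/p$ holds).

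For the refined bound I would peel off the event on which $\hat p$ concentrates. Using $1/p=\one{S_n>0}/p+\one{S_n=0}/p$,
\[
Z=\frac{\one{S_n>0}\,(p-\hat p)}{p\,\hat p}-\frac{\one{S_n=0}}{p}\le\frac{\one{S_n>0}\,(p-\hat p)}{p\,\hat p}.
\]
Fix $\delta\in(0,1)$ and split on $G=\{\hat p\ge(1-\delta)p\}$. On $G$ one has $S_n>0$, $p-\hat p\le\delta p$, and $\hat p\ge(1-\delta)p$, so the right-hand integrand is at most $\delta/((1-\delta)p)$; on $G^c$ I bound it crudely by $\one{S_n>0}/\hat p=n\,\one{S_n>0}/S_n\le n$. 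Hence, using the multiplicative Chernoff lower-tail bound $\Prob{S_n\le(1-\delta)np}\le e^{-np\delta^2/2}$,
\[
\E[Z]\le\frac{\delta}{(1-\delta)p}+n\,e^{-np\delta^2/2}.
\]

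It then remains to tune $\delta$. Choosing $\delta=\sqrt{3\ln(np/2)/(np)}$ makes $np\delta^2/2=\tfrac32\ln(np/2)$, so the second term equals $n(np/2)^{-3/2}=\tfrac2p\sqrt{2/(np)}$ exactly (using $n=np/p$), reproducing the ``$+1$'' term; and as long as $\delta\le\tfrac12$, so that $1/(1-\delta)\le2$, the first term is at most $2\delta/p=\tfrac2p\sqrt{2/(np)}\,\sqrt{\tfrac32\ln(np/2)}$, reproducing the logarithmic term. Summing the two gives precisely the claimed inequality. The only condition to verify is $\delta\le\tfrac12$, i.e.\ $12\ln(np/2)\le np$; a short computation shows this holds exactly once $np\ge34$ (it is essentially tight, since $12\ln 17\approx 33.99$), which is precisely the hypothesis.

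The main obstacle is this refined step: one must arrange the concentration split and the Chernoff exponent so that the slack on $G$ and the tail mass on $G^c$ balance and reproduce the stated constants, and recognize that the threshold $np\ge34$ is exactly what forces $\delta\le\tfrac12$. Everything else, including the always-valid bound, is routine once the reduction to $\E[\one{S_n>0}/S_n]$ is in place.
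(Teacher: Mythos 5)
Your proof is correct. For the refined bound ($np\ge 34$) you follow essentially the same route as the paper: it too splits on the multiplicative Chernoff lower-tail event, bounds $Z$ on the good event by $\frac{2}{p}\sqrt{\frac{2}{np}\ln\frac{1}{\delta}}$ (its inequality $1/(1-x)\le 1+2x$ playing the role of your $1/(1-\delta)\le 2$), bounds $Z\le n$ on the complement, and then tunes the failure probability to $(np/2)^{-3/2}$ --- which is exactly the tail mass produced by your deviation choice $\delta=\sqrt{3\ln(np/2)/(np)}$, so the two parameterizations (deviation versus failure probability) coincide; even the admissibility constraints agree, both reducing to $6\ln(np/2)\le np/2$, tight at $np=34$ just as you observe. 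Where you genuinely depart is the always-valid bound: the paper derives $\EE{Z}\le 4/p$ from the same Chernoff machinery with the choice $\delta=e^{-u/4}$, $u=np/2$, whereas your argument via $\one{k>0}/k\le 2/(k+1)$ and the closed form $\E\left[\frac{1}{1+S_n}\right]=\frac{1-(1-p)^{n+1}}{(n+1)p}$ is entirely elementary, needs no concentration inequality at all, and in fact yields the strictly stronger bound $\EE{Z}\le (n-1)/((n+1)p)\le 1/p$, improving the paper's constant from $4$ to $1$. Your exact decomposition $Z=\frac{\one{S_n>0}\,(p-\hat p)}{p\,\hat p}-\frac{\one{S_n=0}}{p}$ is also a slightly cleaner entry point than the paper's direct estimate of $1/\hat p-1/p$ on the good event, though it changes nothing substantive in the refined part.
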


\begin{proof}[Proof of \cref{thm:regrminimax}]

First, we bound $V_{3,n}$ in terms of $V_1$.
From \cref{lem:invmoment}, 
$
\E\left[\frac{\1\{n(a)>0\}}{\hat\pi_D(a)}-\frac{1}{\pi_D(a)}\right] \le \frac{4}{\pi_D(a)}
$,
while if $n\pi_D^*\ge 34$,
$
\E\left[\frac{\1\{n(a)>0\}}{\hat\pi_D(a)}-\frac{1}{\pi_D(a)}\right] \le
\frac{2}{\pi_D(a)} 
\sqrt{\frac{2}{n\pi_D(a)}} \left( \sqrt{\frac32 \ln\left(\frac{n\pi_D(a)}{2}\right)}+1 \right)
$.
Plugging these into the definition of $V_{3,n}$, we have $V_{3,n} \le 4 V_1$ for all $n$.  Furthermore, when $n\pi_D^*\ge 34$, thanks to monotonicity of the function $t \mapsto \sqrt{\frac{2}{t}}\left(\sqrt{\frac{3}{2}\ln t}+1\right)$ for $t>1$, we have
\begin{align}
\label{eq:v3nasb}
V_{3,n} \le  2 V_1 \sqrt{\frac{2}{n\pi_D*}} \left( \sqrt{\frac32 \ln\left(\frac{n\pi_D^*}{2}\right)}+1 \right)\,.
\end{align}
Now, to bound 
$V_{0,n} =\left(\sum_a\pi(a)r_\Phi(a)p_{a,n}\right)^2+ \sum_a \pi^2(a) r_\Phi^2(a) \,  p_{a,n}(1-p_{a,n})$, remember that 
one lower bound for $R_n^*$ is $R_{\max}^2 \max_a \pi^2(a) p_{a,n}/4$,
where $R_{\max}$ is the range for $r_\Phi$. \todoc{The range has to be in the final result..}
Hence,
\begin{align}
V_{0,n} 
& =  K^2\left(\frac1K \sum_a\pi(a)r_\Phi(a)p_{a,n}\right)^2+
 \sum_a  \pi^2(a) r_\Phi^2(a) \,p_{a,n}(1-p_{a,n})\nonumber \\
& \le K \sum_a \pi^2(a)r_\Phi^2(a)p_{a,n}^2 + \sum_a\pi^2(a)r_\Phi^2(a)p_{a,n}(1-p_{a,n}) \nonumber \\
& \le K \sum_a \pi^2(a) r_\Phi^2(a) p_{a,n}
\le K^2 \max_a \pi^2(a) r_\Phi^2(a)  p_{a,n}\,.
 \label{eq:v0nasb}
\end{align}
Hence, using $R_n^*\ge V_1/n$, \todoc{constant?}
\begin{align}
\MSE{\vreg} 
	&\le V_{0,n} + \tfrac{V_1+V_3}{n} 
	\le 4K^2 \max_a \pi^2(a) r_\Phi^2(a)p_{a,n} + 5\tfrac{V_1}{n} \le  (4K^2+5) R_n^*\,.
\label{eq:msereg1}
\end{align}
On the other hand, assuming that $\min_a \sigma^2(a)>0$, we also have
\begin{align*}
V_{0,n}
& \le K \sum_a \pi^2(a) r_\Phi^2(a) p_{a,n}
 \le K \max_{b\in \A} \left(\tfrac{r_\Phi^2(b)}{\sigma^2(b)} \right)\, \sum_a p_{a,n} \pi^2(a) \sigma^2(a) \le K \max_{b\in \A} \left(\tfrac{r_\Phi^2(b)}{\sigma^2(b)} \right)\,\, \tfrac{V_1}{n}\,,
\end{align*}
where in the last inequality we used
that $p_{a,n}\le e^{-n\pi_D(a)}$ and $e^{-x}\le 1/x$, which is true for any $x>0$, and finally also the definition of $V_1$.
Similarly to the previous case, we get
\begin{align*}
\MSE{\vreg}
&\le  \left\{K\max_{b\in \A} \left(\tfrac{r_\Phi^2(b)}{\sigma^2(b)} \right) +5 \right\} \frac{V_1}{n} 
 \le \left\{ K \max_{b\in \A} \left(\tfrac{r_\Phi^2(b)}{\sigma^2(b)} +5 \right) \right\}R_n^*\,.
\end{align*}
Combining this with~\eqref{eq:msereg1} gives~\eqref{eq:mseregbound}.

For the second part of the result, choose $\pi(a) = \pi_D(a) = 1/K$, $r_\Phi(a) = 1$.
For $K\ge 2$, $p_{a,n} = (1-1/K)^n = e^{-n \log(1/(1-1/K))} = e^{-n \log(1+1/(K-1))} \ge e^{-n/(K-1)}$. 
Hence, we have 
$\MSE{\vreg} 
	\ge (\EE{\vreg - \vtrue})^2 
	= \left(\sum_a\pi(a)r_\Phi(a)p_{a,n}\right)^2 
	\ge e^{-2n/(K-1)}$.
Now, consider the LR estimator. Choosing $\sigma^2=0$, we have $V_1=0$
and so by \cref{prop:lrmse},
\begin{align*}
\sup_{
		\Phi: 0\le r_\Phi \le 1,
		\sigma^2_\Phi= 0} \MSE{\vlr} 
= 
\sup_{
		\Phi: 0\le r_\Phi \le 1,
		\sigma^2_\Phi= 0} V_2/n
 \le \frac1{n}\,.
\end{align*}
Hence,
$
\frac{\MSE{\vreg} }{R_n^*(\pi,\pi_D,1,0)} \ge \frac{e^{-2n/(K-1)}}{ \sup_{\Phi: 0\le r_\Phi \le 1, \sigma^2_\Phi= 0} \MSE{\vlr}  }
\ge n e^{-2n/(K-1)}
$.

Finally, the for the last part, 
fix any $\pi, \pi_D$, $\sigma^2$, $\Phi$ such that $\sigma^2_\Phi\le \sigma^2$.
Then, for $n$ large enough,
$
\MSE{\vreg} 
\le V_{0,n} + \frac{V_1+V_3}{n}  
 \le C e^{-n/C} + \frac{V_1}{n} \,\,
\left( 1+ C \sqrt{\frac{\ln n}{n}} \right) 
$,
where $C>0$ is a problem dependent constant, 
and the second inequality used \eqref{eq:v3nasb}  and \eqref{eq:v0nasb}.
Combining this with  \eqref{eq:aslb} of \cref{thm:lb} gives the desired result.
\if0
Hence,
\begin{align*}
\limsup_{n\to\infty} \frac{\MSE{\vreg}}{R_n^*(\pi,\pi_D,\sigma^2)} 
& \le \limsup_{n\to\infty} \frac{C e^{-n\pi_D^*} + (V_1/n) (1+C\sqrt{ \ln(n)/n}) }{R_n^*(\pi,\pi_D,\sigma^2)} \\
& \le \limsup_{n\to\infty} \frac{C e^{-n\pi_D^*} + (V_1/n) (1+C\sqrt{ \ln(n)/n}) }{V_1/n}  = 1\,,
\end{align*}
where the second inequality used \eqref{eq:aslb} of \cref{thm:lb}.
\fi
\end{proof}
%

\if0
\textbf{Note: The following calculation of the ``crossing point'' $n^*$ ignores $V_{3,n}$ and needs to be updated.}
Therefore, the comparison between the two estimators essentially boils down to the comparison of $Ke^{-n\pi_D^*}$ and $V_2/n$.  Crossing happens when these two are equal, namely, when
\[
n = n^* := \frac{1}{\pi_D^*}\ln\frac{K}{V_2}\ln\left(K\ln\frac{K}{V_2}\right).
\]
For $n<n^*$, $\hat v_2$ has lower MSE; for $n>n^*$, $\hat v_1$ has lower MSE.
\fi

\subsection{Simulation Results}

\begin{figure}
\begin{center}
\begin{tabular}{cc}
\includegraphics[width=0.45\columnwidth]{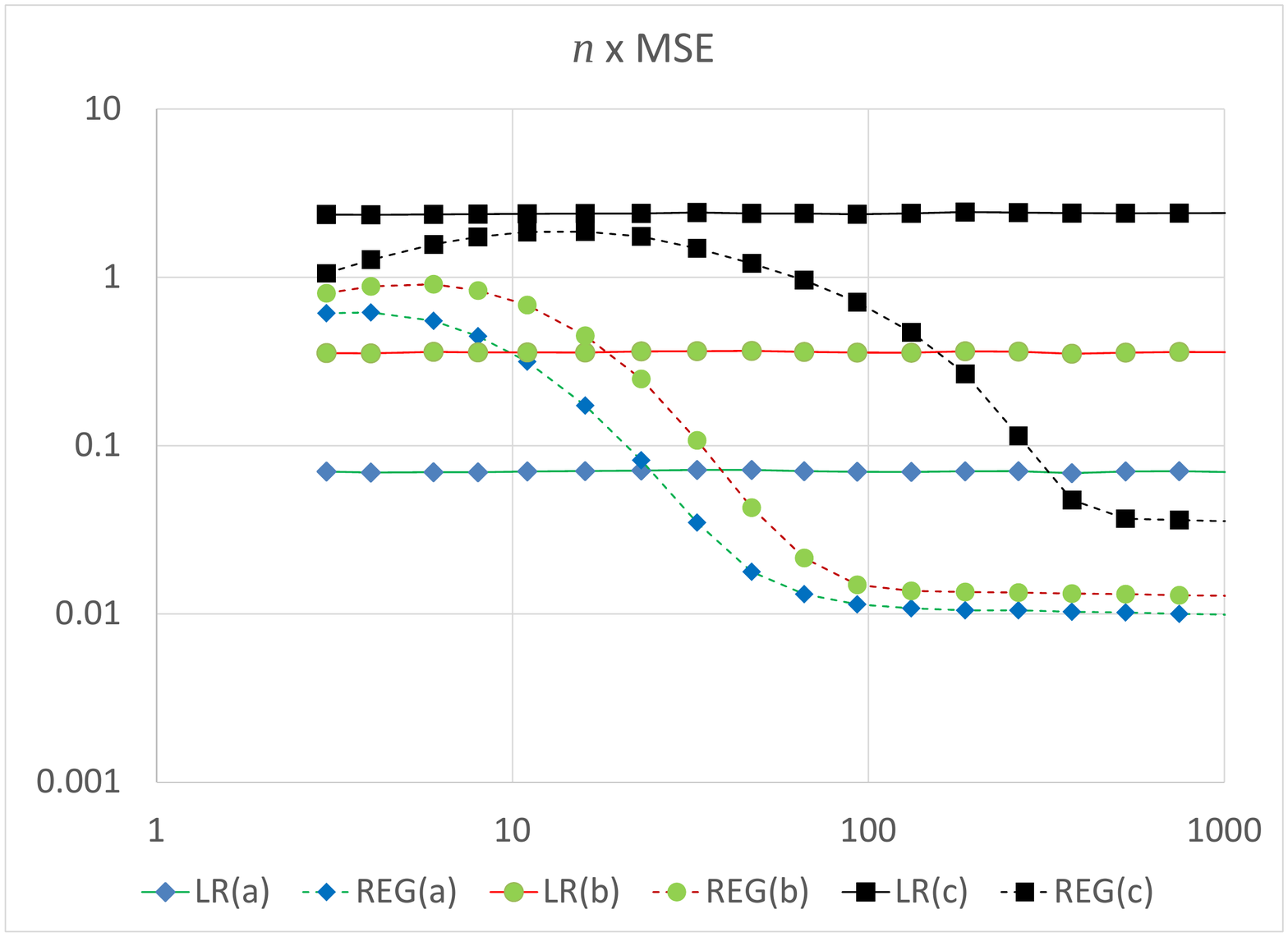}
\hspace{5mm}
\includegraphics[width=0.45\columnwidth]{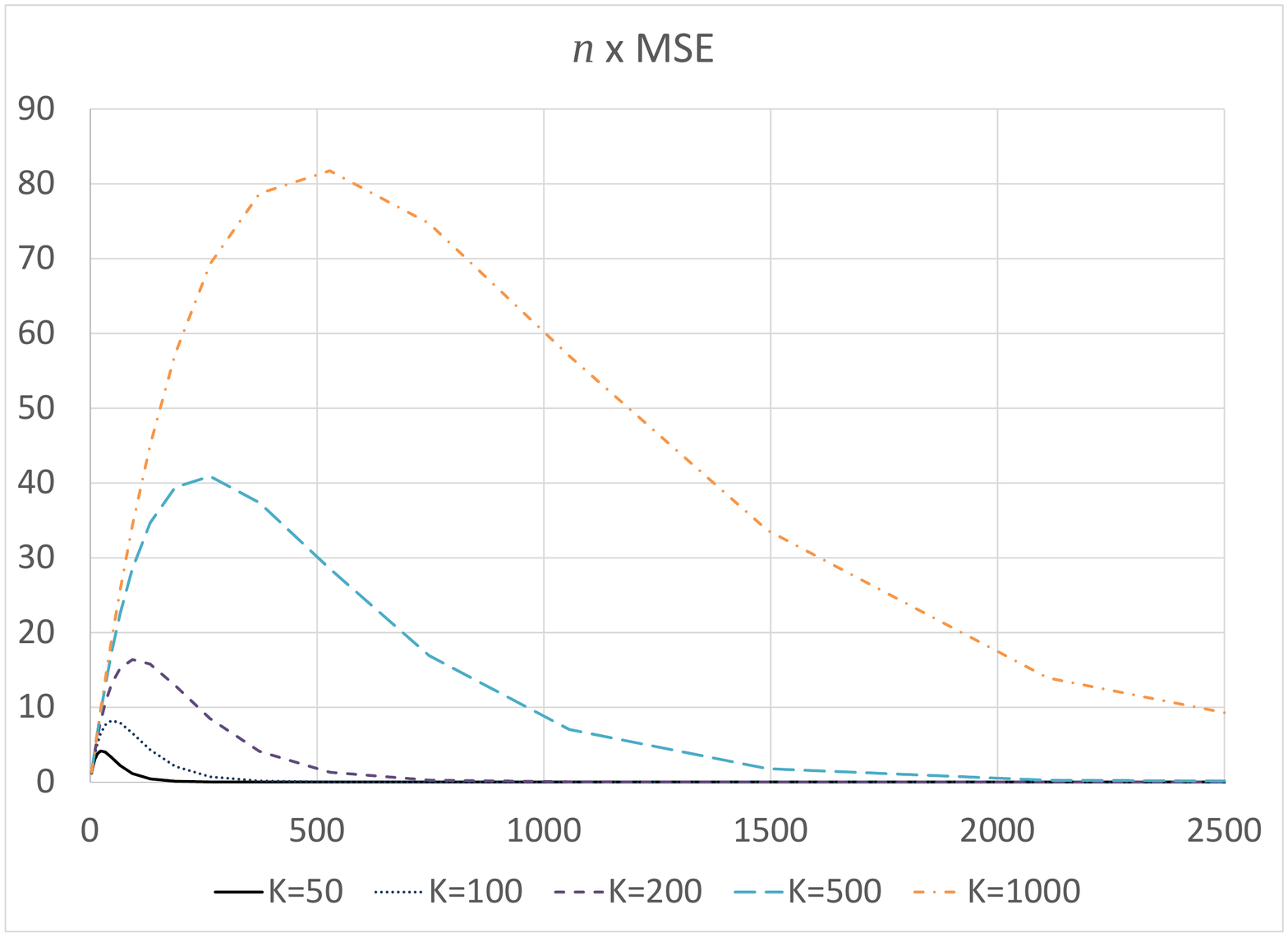}
\end{tabular}
\end{center}
\caption{nMSE of estimators against sample size.} \label{fig:exp}
\end{figure}

This subsection corroborates our analysis with simulation results that empirically demonstrate the impact of key quantities on the MSE of the two estimators.  Two sets of experiments are done, corresponding to the left and right panels in Figure~\ref{fig:exp}.  In all experiments, we repeat the data-generation process (with $\pi_D$) 10,000 times, and compute the MSE of each estimator.  All reward distributions are normal distributions with $\sigma^2=0.01$ and different means.  We then plot normalized MSE (MSE multiplied by sample size $n$), or nMSE, against $n$.

The first experiment is to compare the finite-time as well as asymptotic accuracy of $\vlr$ and $\vreg$. We choose $K=10$, $r_\Phi(a)=a/K$, $\pi(a)\propto a$.  Three choices of $\pi_D$ are used: (a) $\pi_D(a)\propto a$, (b) $\pi_D(a)=1/K$, and (c) $\pi_D(a)\propto (K-a)$.  These choices lead to increasing values of $V_2$ (with $V_1$ approximately fixed).  Clearly, the nMSE of $\vlr$ remains constant, equal to $V_1+V_2$, as predicted in Proposition~\ref{prop:lrmse}.  In contrast, the nMSE of $\vreg$ is large when $n$ is small, because of the high bias, and then quickly converges to the asymptotic minimax rate $V_1$ (Theorem~\ref{thm:regrminimax}, part~iii).  As $V_2$ can be arbitrarily larger than $V_1$, it follows that $\vreg$ is preferred over $\vlr$, as least for sufficiently large $n$ that is needed to drive the bias down.  It should be noted that in practice, after $D^n$ is generated, it is easy to quantify the bias of $\vreg$ simply by identifying the set of actions $a$ with $n(a)=0$.

The second experiment is to show how $K$ affects the nMSE of $\vreg$.  Here, we choose $\pi_D=1/K$, $r_\Phi(a)=a/K$, $\pi(a)\propto a$, and vary $K\in\{50,100,200,500,1000\}$.  As Figure~\ref{fig:exp}~(right) shows, a larger $K$ gives $\vreg$ a harder time, which is consistent with Theorem~\ref{thm:regrminimax} (part~\ref{thm:regrminimax:part1}).  Not only does the maximum nMSE grow approximately linearly with $K$, the number of samples needed for nMSE to start decreasing also scales roughly as $(K-1)/2$, as indicated by part~\ref{thm:regrminimax:part2} of Theorem~\ref{thm:regrminimax}.  \todol{Why $(K-1)/2$ instead of $K/2$?}

\section{Extensions} \label{sec:extension}

In this section, we consider extensions of our previous results to contextual bandits and Markovian Decision Processes, while implications to semi-supervised learning 
\citep{ZhuGo09} 
are discussed in the supplementary material.

\subsection{Contextual Bandits}

The problem setup is as follows: In addition to the finite action set $\A = \{1,2,\ldots,K\}$, we are also given a context set $\X = \{1,2,\ldots,M\}$.
A policy now is a map $\pi: \X \to [0,1]^\A$ such that for any $x\in \X$, $\pi(x)$ is a probability distribution over the action space $\A$. For notational convenience, we will use $\pi(a|x)$ instead of $\pi(x)(a)$. 
The set of policies over $\X$ and $\A$ will be denoted by $\Pi(\X,\A)$.
The process generating the data $D^n=\{(X_i,A_i,R_i)\}_{1\le i\le n}$ is described by the following: $(X_i,A_i,R_i)$ are independent copies of $(X,A,R)$, where $X\sim \mu(\cdot)$, $A\sim \pi_D(\cdot|X)$ and $R \sim \Phi(\cdot|A,X)$ for some unknown family of distributions $\{\Phi(\cdot|a,x)\}_{a\in \A,x\in \X}$ and \emph{known} policy $\pi_D \in \Pi(\X,\A)$ and context distribution $\mu$.  For simplicity, we fix $\rmax=1$.

We are also given a \emph{known} target policy $\pi\in \Pi(\X,\A)$ and want to estimate its value, 
$\vtruectx\defeq \E_{X\sim \mu,A\sim\pi(\cdot|X),R\sim\Phi(\cdot|A,X)}[R]$ based on the knowledge of 
$D^n$, $\pi_D$, $\mu$ and $\pi$, where the quality of an estimate $\hat{v}$ constructed based on $D^n$ (and $\pi,\pi_D,\mu$) is measured by its mean squared error, $\MSE{\hat{v}}\defeq \EE{ (\hat{v}-\vtruectx)^2 }$, just like in the case of contextless bandits. \todoc{mean squared error, or mean-squared error? in any ways, write it one way.}
Let $\sigma^2_\Phi(x,a)  = \V(R)$ for $R\sim \Phi(\cdot|x,a)$, $x\in \X,a\in \A$.
An estimator $\Aalg$ can be considered as a function that maps $(\mu,\pi,\pi_D,D^n)$ to an estimate of $\vtruectx$, denoted $\valg(\mu,\pi,\pi_D,D^n)$.  
Fix $\sigma^2 \defeq (\sigma^2(x,a))_{x\in \X, a\in \A}$.
The minimax optimal risk subject to $\sigma^2_\Phi(x,a)\le \sigma^2(x,a)$ for all $x\in\X,a\in \A$ is defined by
$
R_n^* (\mu,\pi,\pi_D,\sigma^2):= \inf_\Aalg \sup_{\Phi:\sigma^2_\Phi \le \sigma^2} \E \left[(\valg(\mu,\pi,\pi_D,D^n)-\vtruectx)^2\right]\,.
$
 
The main observation is that the estimation problem for the contextual case can actually be reduced to the contextless bandit case by treating the context-action pairs as ``actions'' belonging to the product space $\X \times \A$.
For any policy $\pi$, by slightly abusing notation, let $(\mu\otimes \pi)(x,a) = \mu(x) \pi(a|x)$ be the joint distribution of $(X,A)$ when $X\sim \mu(\cdot)$, $A\sim \pi(\cdot|X)$.
This way, we can map any contextual policy evaluation problem defined by $\mu$,$\pi_D$, $\pi$, $\Phi$ and a sample size $n$ into a contextless policy evaluation problem defined by $\mu \otimes \pi_D$, $\mu \otimes \pi$, $\Phi$ with action set  $\X\times \A$.  Therefore, with $V_1$ and $V_2$ defined similarly, one can conclude the following results:
%
%
%
%
\begin{theorem}
Pick any $n>0$, $\mu$, $\pi_D$, $\pi$ and $\sigma^2$. Then, one has
$
R_n^* (\mu,\pi,\pi_D,\sigma^2)= \Omega\left( \max_{B \subset\X\times\A}\{\sum_{(x,a)\in B}\mu(x)\pi(a|x)\}^2\{1-\sum_{(x,a)\in B}\mu(x)\pi_d(a|x)\}^n + V_1/n\right),
$
$\MSE{\vlr} = (V_1+V_2)/n$, 
and $\MSE{\vreg} \le C R_n^* (\mu,\pi,\pi_D,\sigma^2)$, for $C=MK\{\min(4MK,\max_{x,a}r_\Phi^2(a)/\sigma_\Phi^2(a))+5\}R_n^*(\mu,\pi,\pi_D,\sigma^2)$.
Furthermore, the MSE of the regression estimator approaches the minimax risk as sample size grows to infinity.
\end{theorem}

\subsection{Markov Decision Processes}

Similarly, results in Section~\ref{sec:mab} can be naturally extended to fixed-horizon, finite Markov decision processes (MDPs).  Here, an MDP is described by a tuple $M=\langle \X,\A,P,\Phi,\nu,H \rangle$, where $\X=\{1,\ldots,N\}$ is the set of states, $\A=\{1,\ldots,K\}$ the set of actions, $P$ the transition kernel, $\Phi:\X\times\A\mapsto\R$ the reward function, $\nu$ the start-state distribution, and $H$ the horizon.
A policy $\pi:\X\mapsto[0,1]^K$ maps states to distributions over actions, and we use $\pi(a|x)$ to denote the probability of choosing action $a$ in state $x$.  
Given a policy $\pi\in\Pi(\X,\A)$, a trajectory of length $H$, denoted $T=(X,A,R)$ (for $X\in\X^H$, $A\in\A^H$, and $R\in\R^H$), is generated as follows: $X(1)\in\nu(\cdot)$; for $h\in\{1,\ldots,H\}$, $A(h)\sim\pi(\cdot|X(h))$, $R(h)\sim\Phi(\cdot|X_{(h)},A_{(h)})$, and $X(h+1)\sim P(\cdot|X_{(h)},A_{(h)})$.  The policy value is defined by 
$v^\pi_\Phi \defeq \E_T[\sum_{h=1}^H R(h)]$.  For simplicity, we again assume $\rmax=1$.
The off-policy evaluation problem is to estimate $v^\pi_\Phi$ from data $D^n=\{T_t\}_{1 \le t \le n}$, where each trajectory $T_t$ is independently generated by an exploration policy $\pi_D\in\Pi(\X,\A)$.  Here, we assume the reward distribution $\Phi$ is unknown; other quantities including $\nu$, $P$, $H$, $\pi$, and $\pi_D$ are all known.  Again, we measure the quality of an estimate $\hat{v}$ by its mean squared error: $\MSE{\hat{v}}\defeq\left[(\hat{v}-v^\pi_\Phi)^2\right]$.
By considering a length-$H$ trajectory of state-actions as an ``action,'', one can apply the results as in the previous subsection to conclude the following:
\begin{theorem}
Pick any $n>0$, $\nu$, $\pi_D$, $\pi$, $P$, $H$, and $\sigma^2$.  Then, one has $R_n^*(\nu,\pi,\pi_D,P,H,\sigma^2)=\Omega\left( \max_{B\subset\T}\{\sum_{(x,a)\in\T}\mu(x,a)\}^2 \{1-\sum_{(x,a)\in\T}\mu_D(\tau)\}^n + V_1/n\right)$, $\MSE{\vlr}=(V_1+V_2)/n$, and $\MSE{\vreg} \le CR_n^*(\nu,\pi,\pi_D,P,H,\sigma^2)$ for $C = N^{H+1}K^H\{\min(4N^{H+1}K^H,\max_{(x,a)\in\T}\frac{r_\Phi^2(x,a)}{\sigma_\Phi^2(x,a)}) + 5\}$.  Moreover, there are cases where such an exponential dependence is unavoidable.  Finally, the MSE of the regression estimator approaches the minimax risk as sample size $n$ grows to infinity.
\end{theorem}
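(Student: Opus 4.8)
The plan is to reduce the MDP off-policy evaluation problem to the contextless multi-armed bandit of Section~\ref{sec:mab} by treating each complete state--action trajectory as a single ``super-action.'' Concretely, I would let $\T$ be the set of trajectories $\tau = (x_1,a_1,\dots,x_H,a_H)$ (optionally carrying the terminal state), so that $|\T| \le N^{H+1}K^H$, and identify each observed $T_t$ with the super-action $\tau_t$ it realizes. Since $\nu$, $P$ and $H$ are common to both policies, the probability of generating $\tau$ is $\mu_D(\tau) = \nu(x_1)\prod_{h=1}^{H} \pi_D(a_h|x_h)P(x_{h+1}|x_h,a_h)$ under the behaviour policy and $\mu(\tau)$ the analogous product for $\pi$, so that in the importance weight $\mu(\tau)/\mu_D(\tau) = \prod_h \pi(a_h|x_h)/\pi_D(a_h|x_h)$ the $\nu$ and $P$ factors cancel. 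The ``reward'' of super-action $\tau$ is the return $G = \sum_{h=1}^H R(h)$; because the per-step rewards are drawn independently given the trajectory, its conditional mean is $r(\tau) = \sum_h r_\Phi(x_h,a_h)$ and its conditional variance is $\sigma^2(\tau)=\sum_h \sigma_\Phi^2(x_h,a_h)$. This exhibits the MDP problem as a contextless bandit over $\T$ with behaviour policy $\mu_D$, target policy $\mu$, and unknown return distributions, with $v^\pi_\Phi = \sum_\tau \mu(\tau) r(\tau)$.

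Once the reduction is in place, the first three assertions follow by invoking the contextless results verbatim with $\T$ in place of $\A$. The LR estimator of Section~\ref{sec:mab-lr} applied to the returns is exactly $\vlr$ for the MDP, so Proposition~\ref{prop:lrmse} gives $\MSE{\vlr} = (V_1+V_2)/n$ with $V_1,V_2$ computed from $\mu,\mu_D,r,\sigma^2$ over $\T$. The minimax lower bound follows from Theorem~\ref{thm:lb} with $\pi(B),p_{B,n}$ instantiated as sums over trajectory subsets $B\subset\T$, and the REG upper bound follows from part~\ref{thm:regrminimax:part1} of Theorem~\ref{thm:regrminimax} with $|\T| \le N^{H+1}K^H$ playing the role of $K$, yielding the stated constant $C$. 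Asymptotic minimax optimality is the third part of Theorem~\ref{thm:regrminimax} applied to the reduced problem, valid because every reachable trajectory has $\mu_D(\tau)>0$.

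For the unavoidability of the exponential factor I would construct an explicit symmetric MDP: take $\nu$ uniform on $\X$, $P(\cdot|x,a)$ uniform on $\X$ for every $(x,a)$, $\pi = \pi_D$ uniform on $\A$ in every state, and deterministic per-step rewards $r_\Phi(x,a) = 1/H$ (hence $\sigma^2 = 0$ and unit return). Under this choice every trajectory is reachable and $\mu_D(\tau) = \mu(\tau) = (NK)^{-H}$, so the reduced bandit has $\tilde K$ equiprobable super-actions with $\tilde K$ of order $(NK)^H$, uniform target policy, unit return, and zero noise --- precisely the configuration used in part~\ref{thm:regrminimax:part2} of Theorem~\ref{thm:regrminimax}. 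Applying that part with $\tilde K$ in place of $K$ gives a suboptimality ratio of at least $n\,e^{-2n/(\tilde K-1)}$, which at $n=(\tilde K-1)/2$ is at least $(\tilde K-1)/(2e)$, exponential in $H$.

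The step I expect to require the most care is not the mechanical transfer of the upper bounds but verifying that the \emph{lower-bound} constructions are realizable as genuine MDP reward distributions: the worst-case environments witnessing Theorem~\ref{thm:lb} and part~\ref{thm:regrminimax:part2} assign returns to whole trajectories, whereas in the MDP returns are forced to be sums of \emph{shared} per-$(x,a)$ rewards. The symmetric construction above sidesteps this for the unavoidability claim (constant rewards factor trivially), but for the general lower bound one must choose the bad subset $B\subset\T$ and the per-$(x,a)$ means and variances so that the induced super-action returns match the desired worst case. Checking that this realizability constraint does not weaken the bound, and pinning down the exact cardinality of $\T$ (the $N^{H+1}$ versus $N^{H}$ bookkeeping of whether the terminal state is counted), is the delicate part; the exponential-in-$H$ conclusion is robust to the precise exponent.
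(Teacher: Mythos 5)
Your reduction is exactly the paper's: the paper likewise treats each length-$H$ trajectory as an ``augmented action'' in $\T=\X^{H+1}\times\A^H$ (so the terminal state \emph{is} counted, resolving your $N^{H+1}$ vs.\ $N^H$ bookkeeping in favour of $N^{H+1}$), forms the induced distributions $\mu$ and $\mu_D$ by the same product formula, and then simply instantiates Theorem~\ref{thm:lb}, Proposition~\ref{prop:lrmse} and Theorem~\ref{thm:regrminimax} over $\T$ after ``adjusting the definitions of quantities like $V_1$ and $V_2$''; so the first three claims and the asymptotic claim match the paper's argument verbatim. Where you genuinely diverge is the unavoidability claim. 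You read ``such an exponential dependence'' as the $N^{H+1}K^H$ suboptimality factor of $\vreg$ and prove it with a fully symmetric MDP (uniform $\nu$, uniform transitions, $\pi=\pi_D$ uniform, deterministic per-step rewards $1/H$), so the reduced bandit has $\tilde K=N^{H+1}K^H$ equiprobable arms with unit noiseless returns and part~(\ref{thm:regrminimax:part2}) of Theorem~\ref{thm:regrminimax} applies as is, giving a ratio of at least $n e^{-2n/(\tilde K-1)}$, exponential in $H$ at $n=(\tilde K-1)/2$. The paper instead exhibits a ``combination lock'' MDP ($\A=\{L,R\}$, $L$ resets to the start state, $R$ advances, reward only in state $N$): any behaviour policy with $\pi_D(L|x)\ge p_*$ reaches the rewarding state with exponentially small probability, so exponentially many trajectories are needed to evaluate the always-$R$ policy --- i.e., it shows the minimax risk \emph{itself} stays $\Omega(\rmax^2)$ until $n$ is exponential in $H$, a statement about the hardness of the problem rather than about $\vreg$'s suboptimality factor relative to $R_n^*$. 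Your reading parallels the bandit part~(\ref{thm:regrminimax:part2}) more faithfully (which is what the theorem sentence appears to assert), and your construction has the virtue that the witnessing environment trivially factors through per-$(x,a)$ rewards; the paper's construction buys a concrete exponential sample-complexity statement but for a different quantity. Finally, your closing caveat identifies a real gap that the paper silently skips over: the MDP minimax risk is a supremum over the \emph{restricted} class of returns of the form $r(\tau)=\sum_h r_\Phi(x_h,a_h)$, which is smaller than the reduced bandit's environment class, so the bandit lower bounds on $R_n^*$ (needed both for the displayed $\Omega(\cdot)$ bound and implicitly inside the proof of the $\vreg$ upper bound) do not transfer verbatim --- e.g.\ a singleton $B=\{\tau^\dagger\}$ is not realizable because perturbing one per-step reward perturbs every trajectory through that state--action pair, and the Cramer--Rao term must account for estimators that pool per-step rewards across trajectories; choosing $B$ as the set of trajectories through a fixed state--action set, as you suggest, is the right repair. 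On this point your proposal is more careful than the paper's own exposition.
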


\section{Conclusions}
\label{sec:conclude}

We have studied the fundamental problem of finite off-policy evaluation.
Despite its importance, it appears that ours are the first results for the finite-sample setting.
While the simplest estimator which uses importance weights (called LR) was found to be sensitive to the magnitude of importance weights, the regression estimator (REG), which estimates the mean rewards for each actions, was found to be less exposed to this value. While the sensitivity of LR is a ``folk theorem'', we have not seen this result formally proven in the literature. We also found that the REG estimator has different qualities: It is minimax optimal up to a constant, which is the minimum of the squared number of actions, $K^2$, and the maximal inverse reward variance. We showed that the dependence on the number of actions cannot in general be removed. There is still a gap of factor of $K$ between our lower and upper bounds. We conjecture that the lower bound shows the correct order (which seems to be confirmed by the experiments). 
While it is not hard to design estimators that combine LR and REG, we did not find these attractive as they cannot be shown to be near-optimal in the above sense. 
Hence, it remains open to design an estimator which is minimax optimal up to a universal constant factor.
One starting point is to investigate the many alternate estimators proposed in the literature (e.g., LR with clipped weights, or dividing by the sum of weights instead of dividing by $n$).
While in the paper we focused on the simplest contextless, finite setting, we showed that our results have implications to other, more contextual settings. However, we have only scratched the surface here: Much more work is needed, however, to provide a fuller analysis of sample based off-policy evaluation in these settings.

\bibliography{refs}

\newpage

\appendix
\section{Technical Details} \label{sec:app}

The appendix collects miscellaneous results that are needed in the main body of the text.
\subsection{Proof of the Second Part of \cref{thm:lb}}
\label{sec:cr}
\todoc{Must check for consistent capitalization of section titles}
We provide here a full proof of the second part of \cref{thm:lb}.
First, we need some background.
Let $\X=(\X,\mathcal{A})$ be a measurable space, $\Theta\subset \R^K$ open,
$p \equiv p(\cdot;\theta)_{\theta\in\Theta}$ be a family of densities with respect to $\nu$, 
a $\sigma$-finite measure on $\X$
such that $p(\cdot;\theta)$ is defined on the closure $\bar\Theta$ of $\Theta$ and $p$ is measurable on the product $\sigma$-algebra of $\X\times \Theta$ where $\Theta$ is equipped with the $\sigma$-algebra of Borel sets.
Denote by $F(\theta) =
	 \int ( \frac{\partial \log p}{\partial \theta}(x;\theta) )
											(\frac{\partial \log p}{\partial \theta}(x;\theta))^\top p(x;\theta) \nu(dx)$
be the Fisher information matrix of $p$ at $\theta$.
The family $p$ is called \emph{regular} if the following hold:
\begin{enumerate}[(a)]
\item $p(x; \theta)$ is a continuous function on $\Theta$ for $\nu$-almost all $x$;
\item $p$ possesses finite Fisher's information at each point $\theta\in \Theta$;
\item the function $\psi(\cdot;\theta)$ is continuous in the space $L^2(\nu)$.
\end{enumerate}
\if0
\begin{theorem}[Cramer-Rao Lower Bound]
\label{thm:cr}
Let $p = (p(x;\theta))_{x\in \X,\theta\in \Theta}$ 
be a regular family of densities with information matrix 
$F(\theta)\succeq 0$, $\theta\in \Theta$. \todoc{Or only at $\theta$?}
Pick $\theta\in \Theta$ and 
let $t:\X \to \R^K$ be measurable such that
$u\mapsto \int (t(x)-u)^2 p(x;u)\nu(dx)$ is bounded in a neighborhood of $\theta$. 
Then, the bias $d(u) = \int t(x) p(x;u) \nu(dx) - u$ 
is continuously differentiable in a neighborhood of the point $\theta\in\Theta$ and
\[
\EE{ (t(X) - \theta)(t(X)-\theta)^\top } \succeq \left(I_{K\times K}+\frac{\partial d(\theta)}{\partial \theta}\right) F^{-1}(\theta) 
\left(I_{K\times K}+\frac{\partial d(\theta)}{\partial\theta} \right)+ d(\theta) d(\theta)^\top\,,
\]
where $I_{K\times K}$ is the $K\times K$ identity matrix,
$X\sim p(\cdot;\theta)\nu(\cdot)$.
\end{theorem}
From this result, it is not hard to derive the following:
\fi
\begin{theorem}[Cramer-Rao Lower Bound]
\label{thm:cr2}
Let $p = (p(x;\theta))_{x\in \X,\theta\in \Theta}$ 
be a regular family of densities with information matrix 
$F(\theta)\succeq 0$, $\theta\in \Theta$. \todoc{Or only at $\theta$?}
Pick $\theta\in \Theta$ and assume that 
$\psi:\Theta \to \R$,
$t:\X \to \R$ are measurable such that
$u\mapsto \int (t(x)-\psi(u))^2 p(x;u)\nu(dx)$ is bounded in a neighborhood of $\theta$
and $\psi$ is differentiable.
Then, the bias $d(u) = \int t(x) p(x;u) \nu(dx) - \psi(u)$ 
is continuously differentiable in a neighborhood of the point $\theta\in\Theta$ and
\begin{align}
\label{eq:cr}
\EE{ (t(X) -\psi( \theta))^2 } 
& 
\ge \left(\psi'(\theta)+d'(\theta)\right)^\top F^{-1}(\theta) 
\left(\psi'(\theta)+d'(\theta)\right)+ \norm{d'(\theta)}_2^2\,,
\end{align}
where $X\sim p(\cdot;\theta)\nu(\cdot)$.
\end{theorem}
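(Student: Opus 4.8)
The plan is to run the classical covariance-matrix proof of the Cram\'er--Rao inequality, specialized to estimating the scalar functional $\psi(\theta)$. Fix $\theta$, write $S(x) = \frac{\partial \log p}{\partial \theta}(x;\theta) \in \R^K$ for the score at $\theta$, and set $T = t(X) - \psi(\theta)$ with $X\sim p(\cdot;\theta)\nu$, and $g \defeq \psi'(\theta)+d'(\theta)$. Everything hinges on two moment identities. Differentiating $\int p(x;u)\,\nu(dx) = 1$ at $u=\theta$ gives $\E_\theta[S] = 0$, so $\Cov_\theta(S) = F(\theta)$. Differentiating $m(u)\defeq \int t(x)p(x;u)\,\nu(dx) = \psi(u)+d(u)$ at $u=\theta$ and taking the derivative inside the integral gives $\E_\theta[t(X)S] = m'(\theta) = \psi'(\theta)+d'(\theta)$; since $\E_\theta[\psi(\theta)S]=0$, this is exactly $\Cov_\theta(T,S) = g$.

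First I would discharge the analytic preliminaries. The regularity conditions (a)--(c), together with the assumed local boundedness of $u\mapsto \int (t(x)-\psi(u))^2 p(x;u)\,\nu(dx)$ near $\theta$, are precisely what is needed to differentiate $m$ under the integral sign and, simultaneously, to conclude that $m$ (hence the bias $d = m-\psi$) is continuously differentiable in a neighborhood of $\theta$. Note that this $C^1$ claim is part of the conclusion, so it must be established here rather than assumed.

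Next I would assemble the joint second-moment matrix of the random vector $(T,S^\top)^\top \in \R^{K+1}$. Being a genuine covariance matrix, it is positive semidefinite:
\[
\begin{pmatrix} \V_\theta(T) & g^\top \\ g & F(\theta) \end{pmatrix} \succeq 0 .
\]
Taking the Schur complement of the $F(\theta)$ block (using $F(\theta)\succeq 0$, with a generalized inverse and a limiting argument should $F(\theta)$ be singular) yields $\V_\theta(T) \ge g^\top F^{-1}(\theta) g = (\psi'(\theta)+d'(\theta))^\top F^{-1}(\theta)(\psi'(\theta)+d'(\theta))$. Finally, decomposing $\E_\theta[T^2] = \V_\theta(T) + (\E_\theta[T])^2$ and using $\E_\theta[T] = d(\theta)$ gives the claimed bound, where the additive bias contribution is $d(\theta)^2$ (I suspect the $\norm{d'(\theta)}_2^2$ in the statement should read $d(\theta)^2$, so as to match the $d(\theta)d(\theta)^\top$ term of the vector-valued version).

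The main obstacle is this analytic step rather than the algebra: rigorously justifying the interchange of differentiation and integration in $m'(\theta) = \E_\theta[t(X)S]$ and the $C^1$ smoothness of $d$. Once the two moment identities are in hand, the positive-semidefinite block matrix and its Schur complement are routine. A shortcut, since the generalized (vector-valued) Cram\'er--Rao bound of \citet{Ibramigov81StatEstBook} (their Theorem~7.3) is already available, is simply to invoke that theorem for a differentiable functional $\psi:\Theta\to\R^m$ and read off the case $m=1$, after which only the identification of the terms $\psi'(\theta)+d'(\theta)$ and $d(\theta)^2$ remains.
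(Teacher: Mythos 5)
Your proof is correct, and it is essentially the argument the paper itself gestures at: the paper gives no proof of \cref{thm:cr2}, saying only that it ``follows closely'' Theorem~7.3 of \citet{Ibramigov81StatEstBook} (stated there for $\psi(\theta)=\theta$), and that classical proof is exactly your two moment identities ($\E_\theta[S]=0$ and $\Cov_\theta(t(X),S)=\psi'(\theta)+d'(\theta)$, obtained by differentiating under the integral, which is where the regularity conditions and the local $L^2$-boundedness of $t-\psi(u)$ are consumed) followed by the positive-semidefiniteness of the joint second-moment matrix of $(t(X),S)$ --- Ibragimov--Has'minskii optimize a Cauchy--Schwarz inequality over directions, which is the same step as your Schur complement of the $F(\theta)$ block. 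More importantly, your suspicion about the additive term is right: as printed, with $\norm{d'(\theta)}_2^2$ in place of $d(\theta)^2$, the inequality is false. Take $K=1$, $X\sim\Normal(\theta,1)$ (so $F\equiv 1$), $\psi(\theta)=\theta$, $t\equiv 0$, and $\theta=0$: the left side is $0$, while $d(u)=-u$ gives $\psi'+d'=0$ and $\norm{d'}_2^2=1$, so the stated right side is $1$. The bias--variance decomposition $\EE{(t(X)-\psi(\theta))^2}=\V_\theta(t(X))+d(\theta)^2$ forces the correct term to be $d(\theta)^2$, matching the $d(\theta)d(\theta)^\top$ term of the vector-valued version that appears (commented out) in the source. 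One caution: the paper's appendix proof of the second part of \cref{thm:lb} actually exploits the erroneous term, invoking $\norm{d_n'(\theta)}_2^2\le \MSE{\hat{v}_n}$ to conclude $d_n'(\theta)\to 0$; with the corrected statement one only gets $d_n(\theta)\to 0$ pointwise, so that downstream step needs a separate justification (as in the treatment of asymptotically unbiased estimators in \citet{Ibramigov81StatEstBook}), which is outside the scope of your proof but worth flagging.
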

The proof follows closely that of Theorem~7.3 of \citet{Ibramigov81StatEstBook}, which states this result for $\psi(\theta)=\theta$ (and thus $k=K$) only, and is hence omitted.
\newcommand{\hv}{\hat{v}}

With this, we can present the details of the proof of the second part of \cref{thm:lb}.
Choose $\X = \A \times \R$, $p(a,y;\theta) = \pi_D(a) \phi( y; r(a), \sigma^2(a))$, where 
$\theta = (r(a))_{a\in A}$ is the unknown parameter to be estimated, and $\phi(\cdot;\mu,\sigma^2)$ is the density of the normal distribution with mean $\mu$ and variance $\sigma^2$, $\Theta = \R$.
It is easy to see that $p = (p(\cdot;\theta)_{\theta\in \Theta})$ is a regular family.
Let the quantity to be estimated be $\psi(\theta) = \sum_a \pi(a) r(a)$.
By~\cref{thm:cr2}, for any estimator $\Aalg$, if
 $\hv_n$ is the estimate constructed by $\Aalg$ 
 based on the data $D^n$ generated from $p(\cdot;\theta)$ in an i.i.d. fashion,
the bias $d_n(\theta) = \E_{\theta}[\hv_n]$ is differentiable on $\Theta$ and
\begin{align}
\label{eq:mselb}
\MSE{\hat{v}} 
& \ge \frac{1}{n} 
 \left(\psi'(\theta)+d_n'(\theta)\right)^\top F^{-1}(\theta) 
\left(\psi'(\theta)+d_n'(\theta)\right)+ \norm{d_n'(\theta)}_2^2\,,
\end{align}
where $F(\theta)$ is the Fisher information matrix underlying $p(\cdot;\theta)$.
If $\MSE{\hat{v}_n} \not\to 0$ then $\limsup_{n\to\infty} \frac{\MSE{\hv_n}}{V_1/n} =+\infty$.
Hence, it suffices to consider $\Aalg$ such that $\MSE{\hat{v}_n} \to 0$.
Then, by~\eqref{eq:mselb}, $0\le \norm{d_n'(\theta)}_2^2 \le \MSE{\hat{v}_n}$, hence we also have $\norm{d_n'(\theta)}_2^2\to 0$.

Now, a direct calculation shows that 
$F(\theta) = \diag( \ldots, \pi_D(a)/\sigma^2(a), \ldots)$ and $\psi'(\theta) = \pi$. 
Hence, $\psi'(\theta)^\top F^{-1}(\theta) \psi'(\theta) = V_1$
and using again \eqref{eq:mselb},
\[
\limsup_{n\to\infty} \frac{\MSE{\hv_n}}{V_1/n} 
\ge 1 - 2 \limsup_{n\to \infty} \frac{(d_n'(\theta))^\top F^{-1}(\theta) \psi'(\theta)}{V_1}  = 1\,,
\]
finishing the proof. \todoc{This last step uses that $\theta$ is finite dimensional.}

\subsection{Proof for Proposition~\ref{prop:lrmse}}

In the proof, we use the shorthand $\vlr$ for $\vlr(\pi,\pi_D,D^n)$.  As already noted,
the estimator is unbiased, so its MSE equals its variance.  Since samples in $D^n$ are independent, we have
\[
\V(\vlr)=\frac{1}{n} \V\Big(\frac{\pi(A)}{\pi_D(A)}R\Big).
\]
The law of total variance implies
\beqan
 \V(\vlr) &=& \frac{1}{n}\E\Big[\V\Big(\frac{\pi(A)}{\pi_D(A)}R | A\Big)\Big] + \frac{1}{n} \V\Big[\E\Big(\frac{\pi(A)}{\pi_D(A)}R | A \Big)\Big].
\eeqan

The first term equals
\[
\frac{1}{n}\E\Big[\Big(\frac{\pi(A)}{\pi_D(A)}\Big)^2 \sigma^2(A) | A\Big)\Big] = \frac{1}{n}\sum_a \pi_D(a)\frac{\pi^2(a)}{\pi_D^2(a)} \sigma^2(a) = \frac{V_1}{n} .
\]

The second term is
\begin{eqnarray*}
\frac{1}{n}\V\left[\frac{\pi(A)}{\pi_D(A)}r_\Phi(A)\right]
&=& \frac{1}{n}\left[ \sum_a \frac{\pi^2(a)}{\pi_D(a)}r_\Phi^2(a) - \left(v_\Phi^\pi\right)^2\right] = \frac{V_2}{n}.
\end{eqnarray*}
Combining the two above completes the proof.

\subsection{Proof for Proposition~\ref{prop:regmse}}

We note that the MSE is equal to the sum of the variance and the squared bias.
Let us abbreviate $\vreg(\pi,D^n)$ by $\vreg$.
First, notice that this estimate is (slightly) biased:
\beqan
\E[\vreg] &=& \sum_a \pi(a) \E[\hat{r}(a)] \\
&=& \sum_a \pi(a) \E[\E[\hat{r}(a)|n(a)]]\\
&=& \sum_a \pi(a) \E[r_\Phi(a) \1\{n(a)>0\} + 0\times \1\{n(a)=0\}]\\
&=& \sum_a \pi(a) r_\Phi(a) (1-p_{a,n}).
\eeqan
Thus, the squared bias can be bounded as follows:
\[
\left(\E[\vreg]-v_\Phi^\pi\right)^2 = \left(\sum_a\pi(a)r_\Phi(a)p_{a,n}\right)^2\,.
\]

For the variance term, we again use the law of total variance to yield:
\[
\V(\vreg) = \E[\V(\vreg|n(1),\ldots,n(K))] + \V(\E[\vreg|n(1),\ldots,n(K)]).
\]
 
Now, conditioned on $n(1),\ldots,n(K)$, the estimates $\{\hat{r}(a)\}_{a\in\A}$ are independent, so, by distinguishing the case $n(a)>0$ (for which the variance of $\hat{r}(a)$ is $\sigma^2(a)/n(a)$) from the other case $n(a)=0$ (for which this variance is $0$), we have
\[
\V(\vreg|n(1),\ldots,n(K)) = \sum_a \pi^2(a) \Big( \frac{\sigma^2(a)}{n(a)}\1\{n(a)>0\} + 0 \times \1\{n(a)=0\}\Big).
\] 
Thus,
$$\E[\V(\vreg|n(1),\ldots,n(K))] = \sum_a \pi^2(a) \sigma^2(a) \E\Big[\frac{1}{n(a)}\1\{n(a)>0\}\Big].$$


For the second variance term, we also distinguish the case $n(a)>0$, for which $\E[\hat{r}(a)|n(a)]=r_\Phi(a)$, from the case $n(a)=0$, for which $\E[\hat{r}(a)|n(a)=0]$, thus
\[
\E[\vreg|n(1),\ldots,n(K)] = \sum_a \pi(a) (r_\Phi(a) \1\{n(a)>0\} + 0\times \1\{n(a)=0\}),
\]
Hence, $
\V(\E[\vreg|n(1),\ldots,n(K)]) = \V(  \sum_a \pi(a) r_\Phi(a) \1\{n(a)>0\} )$, which by Lemma~\ref{lem:reg-var-ub} implies
\[
\V(  \sum_a \pi(a) r_\Phi(a) \1\{n(a)>0\} ) \le   \sum_a \pi^2(a) r_\Phi^2(a) \,  p_{a,n}(1-p_{a,n})\,.
\]
The proof of the upper bound is then completed by adding squared bias to variance, and using definitions of $V_{0,n}$, $V_1$, and $V_3$.

For the lower bound, use \cref{thm:cr2}.
As mentioned in \cref{sec:cr}, the Fisher information matrix is
$F(\theta) = \diag( \ldots, \pi_D(a)/\sigma^2(a), \ldots)$ and
if the target is $\psi(\theta) = \sum_a \pi(a) r(a)$,  $\psi'(\theta) = \pi$. Calculating the derivative of the bias and plugging into~\eqref{eq:cr}, we get the result.

\subsection{Proof for Lemma~\ref{lem:invmoment}}

For convenience, the lemma is restated here.

\textbf{Lemma~\ref{lem:invmoment}.} \textit{
Let $X_1,\ldots,X_n$ be $n$ independent Bernoulli random variables with parameter $p>0$. 
Letting $S_n = \sum_{i=1}^n X_i$, $\hat{p} = S_n/n$, $Z = \frac{\one{S_n>0}}{\hat{p}} - \frac{1}{p}$, we have for any $n$ and $p$ that
\begin{align}
\label{eq:invmoment-1}
\EE{ Z } \le \frac{4}{p}\,.
\end{align}
Further, when $np\ge 34$,
\begin{align}
\label{eq:invmoment-2}
\EE{ Z } \le \frac{2}{p} \sqrt{\frac{2}{np}} \left( \sqrt{\frac32 \ln\left(\frac{np}{2}\right)}+1 \right)\,.
\end{align}
}

\begin{proof}
According to the multiplicative Chernoff bound for the low tail (cf. \cref{lem:chernoff-lower} in the Appendix), 
for any $0<\delta\le 1$, 
with probability at least $1-\delta$, we have
\[
\hat{p} \ge p - \sqrt{\frac{2p}{n}\ln\frac{1}{\delta}} .
\]
Denote by  $\mathcal{E}_\delta$  the event when this inequality holds.
Assuming 
\begin{align}\label{eq:deltalimit}
\frac{2}{np}\ln\frac{1}{\delta}\le 1/4\,,
\end{align}
thanks to $1/(1-x) \le 1+2x$ which holds for any $x\in [0,1/2]$,
on $\mathcal{E}_\delta$ 
we have
\begin{align*}
Z \le \frac{1}{\hat{p}} - \frac{1}{p}
 \le \frac{1}{p}\left( \frac{1}{1-\sqrt{\frac{2}{np} \ln\frac{1}{\delta}}}  - 1 \right) 
 \le \frac{2}{p} \sqrt{\frac{2}{np} \ln\frac{1}{\delta}} \,.
\end{align*}
Then, since $Z \le n$, we have for every $\delta$ satisfying~\eqref{eq:deltalimit} that
\begin{align}
\E[Z] \le \frac{2}{p}\sqrt{\frac{2}{np}\ln\frac{1}{\delta}} + \delta n 
= \frac{2}{p}\left(\sqrt{\frac{2}{np}\ln\frac{1}{\delta}} + \frac{np}{2} \delta \right)
= \frac{2}{p} f\left( \frac{np}{2},\delta\right)\,,
\label{eq:zbound}
\end{align}
where $f(u,\delta) = \sqrt{\frac{1}{u} \ln \frac{1}{\delta}} + u \delta$.
Hence, it remains to choose $\delta$ 
to approximately minimize $f(u,\delta)$ subject to the constraint $\delta\ge e^{-u/4}$ (due to \eqref{eq:deltalimit}). 
First, note that if we choose $\delta = e^{-u/4}$, then $f(u,e^{-u/4}) \le \frac12 + u e^{-u/4} < 2$, showing that $\E{Z} \le 4/p$, proving the first part of the result.

To get the second part, we choose 
$\delta = u^{-3/2}$, which satisfies \eqref{eq:deltalimit} since $u^{-3/2}\ge e^{-u/4}$ for $u\ge17$.
Then, $f(u,u^{-3/2}) = u^{-1/2} \left( \sqrt{\frac32 \ln(u)}+1 \right)$.
Plugging this into~\eqref{eq:zbound} finishes the proof.
\end{proof}

\subsection{Technical Lemmas}

\begin{lemma}[]
\label{lem:reg-var-ub}
Using notation from Section~\ref{sec:reg}, and  $w_a = \pi(a) r_\Phi(a)$ one has
\[
V^* \defeq 
\V\left(\sum_a\pi(a)r_\Phi(a)\1\{n(a)>0\}\right)
\le 
\sum_{a\in \A} w_a^2\,  p_{a,n}  (1-p_{a,n}) 
\]
provided that $r(a)\ge 0$ for all action $a\in \A$.
\end{lemma}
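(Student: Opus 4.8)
The plan is to expand the variance of the sum $\sum_a w_a Y_a$, where $Y_a \defeq \one{n(a)>0}$, into its diagonal and off-diagonal contributions and to show that the latter never help. Writing
\[
V^* = \sum_a w_a^2\, \V(Y_a) + \sum_{a\ne b} w_a w_b\, \Cov(Y_a, Y_b)\,,
\]
the diagonal term is exactly $\sum_a w_a^2\, p_{a,n}(1-p_{a,n})$, since $Y_a$ is Bernoulli with $\E[Y_a] = 1 - p_{a,n}$ and hence $\V(Y_a) = p_{a,n}(1-p_{a,n})$. Thus the claimed bound reduces to showing that the cross terms are nonpositive.

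Since $r_\Phi(a) \ge 0$ and $\pi(a)\ge 0$, all the weights $w_a = \pi(a) r_\Phi(a)$ are nonnegative, so it suffices to prove $\Cov(Y_a, Y_b) \le 0$ for every pair $a\ne b$; that is, the occupancy indicators are pairwise negatively correlated. I would establish this by direct computation. By inclusion--exclusion,
\[
\E[Y_a Y_b] = \Prob{n(a)>0,\ n(b)>0} = 1 - p_{a,n} - p_{b,n} + p_{\{a,b\},n}\,,
\]
where $p_{\{a,b\},n} = (1-\pi_D(a)-\pi_D(b))^n$ is the probability that neither $a$ nor $b$ is sampled. Subtracting $\E[Y_a]\E[Y_b] = (1-p_{a,n})(1-p_{b,n})$ yields
\[
\Cov(Y_a, Y_b) = p_{\{a,b\},n} - p_{a,n}\, p_{b,n} = (1-\pi_D(a)-\pi_D(b))^n - \bigl((1-\pi_D(a))(1-\pi_D(b))\bigr)^n\,.
\]

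To finish I would note that both bases are nonnegative (as $a\ne b$ forces $\pi_D(a)+\pi_D(b)\le 1$) and that $t\mapsto t^n$ is nondecreasing there, so the sign of the covariance matches that of $(1-\pi_D(a)-\pi_D(b)) - (1-\pi_D(a))(1-\pi_D(b)) = -\pi_D(a)\pi_D(b) \le 0$. Hence $\Cov(Y_a,Y_b)\le 0$, the cross terms only decrease $V^*$, and the lemma follows. The only real content is this negative-correlation step; the rest is bookkeeping. I would flag the nonnegativity hypothesis on $r_\Phi$ as the essential ingredient: it is precisely what makes every $w_a$ nonnegative, so that the (nonpositive) covariances can be discarded. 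Without it, mixed signs among the $w_a$ could pair with the negative covariances to produce a positive contribution exceeding the diagonal, and the bound would fail.
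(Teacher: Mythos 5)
Your proof is correct and follows essentially the same route as the paper's: both expand the variance of $\sum_a w_a \one{n(a)>0}$ into diagonal and cross terms and use $w_a \ge 0$ to discard the cross terms. The only difference is that the paper drops them by citing negative association of the occupancy indicators as a known fact, whereas you verify the pairwise bound $\Cov(Y_a,Y_b) = (1-\pi_D(a)-\pi_D(b))^n - \bigl((1-\pi_D(a))(1-\pi_D(b))\bigr)^n \le 0$ directly, which makes the key step self-contained.
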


\begin{proof}
Let $X_a = \one{n(a)>0}$.
First, note that $\EE{X_a} = p_{a,n}$ and so
\begin{align*}
\V\left(\sum_{a\in \A} w_a\1\{n(a)>0\}\right)
& = \EE{\Big\{\sum_{a\in \A} w_a(X_a-p_{a,n}) \Big\}^2 }\\
& = \sum_{a,b\in \A} w_a w_b\, \EE{ (X_a-p_{a,n}) (X_b-p_{b,n}) } \\
& \le \sum_{a\in \A} w_a^2\, \EE{ (X_a-p_{a,n})^2} & (\text{negative association}) \\
& = \sum_{a\in \A} w_a^2\,  p_{a,n}  (1-p_{a,n}) \,.
\end{align*}
\end{proof}

\begin{lemma}[Multiplicative Chernoff Bound for the Lower Tail, Theorem~4.5 of \citet{MiUp05:book}]
\label{lem:chernoff-lower}
Let $X_1,\ldots,X_n$ be independent Bernoulli random variables with parameter $p$, $S_n = \sum_{i=1}^n X_i$.
Then, for any $0\le \beta<1$,
\[
\Prob{\frac{S_n}{n} \le (1-\beta)  p } \le \exp\left( - \frac{\beta^2 n p }{2} \right)\,.
\]
\end{lemma}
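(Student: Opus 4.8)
The plan is to use the classical Cram\'er--Chernoff exponential-moment method specialized to the lower tail. Writing $\mu = np = \EE{S_n}$, for any $t>0$ the map $x\mapsto e^{-tx}$ is decreasing, so Markov's inequality applied to $e^{-tS_n}$ gives
\[
\Prob{S_n \le (1-\beta)\mu} = \Prob{e^{-tS_n}\ge e^{-t(1-\beta)\mu}} \le e^{t(1-\beta)\mu}\,\EE{e^{-tS_n}}.
\]
Using independence together with $\EE{e^{-tX_i}} = 1+p(e^{-t}-1)$ and the elementary bound $1+x\le e^x$, I would bound $\EE{e^{-tS_n}} = \bigl(1+p(e^{-t}-1)\bigr)^n \le \exp\bigl(np(e^{-t}-1)\bigr)$, so that the tail probability is at most $\exp\bigl(\mu[(1-\beta)t+e^{-t}-1]\bigr)$.

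Next I would optimize the free parameter $t>0$. Setting the derivative of the exponent to zero yields the minimizer $t^\star = \ln\frac{1}{1-\beta}$, which is legitimate because $0\le\beta<1$. Substituting $t^\star$ collapses the exponent to $\mu\,g(\beta)$ with $g(\beta) = -(1-\beta)\ln(1-\beta)-\beta$.

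The final step, and the only nonroutine one, is to control $g(\beta)$. Expanding $\ln(1-\beta)=-\sum_{k\ge1}\beta^k/k$ and simplifying, I expect the telescoped form $g(\beta) = -\sum_{k\ge2}\frac{\beta^k}{k(k-1)}$, every term of which is nonpositive on $[0,1)$ and whose leading term is exactly $-\beta^2/2$; hence $g(\beta)\le -\beta^2/2$. Combining the pieces gives $\Prob{S_n/n\le(1-\beta)p}\le\exp(-\beta^2 np/2)$, as desired. The main obstacle is precisely this last estimate $g(\beta)\le -\beta^2/2$, which needs either the power-series computation above or a convexity argument; the rest is standard Chernoff bookkeeping. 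Since the statement is quoted verbatim as Theorem~4.5 of \citet{MiUp05:book}, one could alternatively dispense with the argument entirely and simply invoke the reference.
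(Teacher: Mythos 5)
Your proof is correct, and it checks out line by line: the exponential-moment bound, the optimizer $t^\star=\ln\frac{1}{1-\beta}$ (legitimate since $0\le\beta<1$, with $\beta=0$ trivial because the claimed bound is then $\le 1$), and the series identity $-(1-\beta)\ln(1-\beta)-\beta=-\sum_{k\ge 2}\frac{\beta^k}{k(k-1)}\le-\frac{\beta^2}{2}$, whose telescoping is valid on $[0,1)$. The paper itself offers no proof---the lemma is invoked verbatim as Theorem~4.5 of \citet{MiUp05:book}---and your derivation is essentially the cited textbook's own argument, so it is the same approach, merely written out in full rather than deferred to the reference.
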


\section{Extension to Contextual Bandits}
\label{sec:cband}

In this section we consider an extension of our previous results to finite \todoc{Countable, discrete??} contextual bandits.
As we shall soon see, the extension is seamless.
The problem setup is as follows: In addition to the finite action set $\A = \{1,2,\ldots,K\}$, we are also given a context set $\X = \{1,2,\ldots,M\}$.
A policy now is a map $\pi: \X \to [0,1]^\A$ such that for any $x\in \X$, $\pi(x)$ is a probability distribution over the action space $\A$. For notational convenience, we will use $\pi(a|x)$ instead of $\pi(x)(a)$. 
The set of policies over $\X$ and $\A$ will be denoted by $\Pi(\X,\A)$.

The process generating the data $D^n=\{(X_i,A_i,R_i)\}_{1\le i\le n}$ is described by the following: $(X_i,A_i,R_i)$ are independent copies of $(X,A,R)$, where $X\sim \mu(\cdot)$, $A\sim \pi_D(\cdot|X)$ and $R \sim \Phi(\cdot|A,X)$ for some unknown family of distributions $\{\Phi(\cdot|a,x)\}_{a\in \A,x\in \X}$ and \emph{known} policy $\pi_D \in \Pi(\X,\A)$ and context distribution $\mu$.  For simplicity, we fix $\rmax=1$.

We are also given a \emph{known} target policy $\pi\in \Pi(\X,\A)$ and want to estimate its value, 
$\vtruectx\defeq \E_{X\sim \mu,A\sim\pi(\cdot|X),R\sim\Phi(\cdot|A,X)}[R]$ based on the knowledge of 
$D^n$, $\pi_D$, $\mu$ and $\pi$, where the quality of an estimate $\hat{v}$ constructed based on $D^n$ (and $\pi,\pi_D,\mu$) is measured by its mean squared error, $\MSE{\hat{v}}\defeq \EE{ (\hat{v}-\vtruectx)^2 }$, just like in the case of contextless bandits. \todoc{mean squared error, or mean-squared error? in any ways, write it one way.}
  
Let $\sigma^2_\Phi(x,a)  = \V(R)$ for $R\sim \Phi(\cdot|x,a)$, $x\in \X,a\in \A$.
An estimator $\Aalg$ can be considered as a function that maps $(\mu,\pi,\pi_D,D^n)$ to an estimate of $\vtruectx$, denoted $\valg(\mu,\pi,\pi_D,D^n)$.  
Fix $\sigma^2 \defeq (\sigma^2(x,a))_{x\in \X, a\in \A}$.
The minimax optimal risk subject to $\sigma^2_\Phi(x,a)\le \sigma^2(x,a)$ for all $x\in\X,a\in \A$ is defined by
\[
R_n^* (\mu,\pi,\pi_D,\sigma^2):= \inf_\Aalg \sup_{\Phi:\sigma^2_\Phi \le \sigma^2} \E \left[(\valg(\mu,\pi,\pi_D,D^n)-\vtruectx)^2\right]\,.
\]
  
The main observation is that the estimation problem for the contextual case can actually be reduced to the contextless bandit case by treating the context-action pairs as ``actions'' belonging to the product space $\X \times \A$.
For any policy $\pi$, by slightly abusing notation, let $(\mu\otimes \pi)(x,a) = \mu(x) \pi(a|x)$ be the joint distribution of $(X,A)$ when $X\sim \mu(\cdot)$, $A\sim \pi(\cdot|X)$.
This way, we can map any contextual policy evaluation problem defined by $\mu$,$\pi_D$, $\pi$, $\Phi$ and a sample size $n$ into a contextless policy evaluation problem defined by $\mu \otimes \pi_D$, $\mu \otimes \pi$, $\Phi$ with action set  $\X\times \A$.
Let $X\sim \mu(\cdot)$, $A\sim \pi_D(\cdot|X)$, $R \sim \Phi(\cdot|X,A)$ and define
\begin{eqnarray*}
V_1 &\defeq& \E\left[\V\left(\frac{\pi(A|X)}{\pi_D(A|X)}R|X,A\right)\right] = \sum_{x,a} \mu(x) \frac{\pi^2(a|x)}{\pi_D(a|x)}\sigma_\Phi^2(x,a)\,, \\
V_2 &\defeq& \V\left(\E\left[\frac{\pi(A|X)}{\pi_D(A|X)}R|X,A\right]\right) 
= \V\left( \frac{\pi(A|X)}{\pi_D(A|X)} r_\Phi(X,A) \right)\,.
\end{eqnarray*}
Note that $V_1$ and $V_2$ are a function of $\mu,\pi_D$ and $\pi$.
In this case the LR and REG estimators take the following form
\[
\vlr = \frac1n \sum_{i=1}^n \frac{\pi(A_i|X_i)}{\pi_D(A_i|X_i)} R_i
\quad{\text{ and }}
\vreg = \sum_{x,a}\mu(x) \pi(a|x) \hat{r}(x,a)\,,
\]
where now 
$\hat{r}(x,a) = \sum_{i} \one{X_i=x,A_i=a}R_i/ \sum_{i} \one{X_i=x,A_i=a}$.
Note that  the regression estimator can also be computed in $O(n)$ time independently of the size of $\X$ and $\A$, based on rewriting it as a likelihood ratio estimator when $\pi_D$ is replaced by its empirical estimates (cf.~\eqref{eq:reglr}).

The mapping from contextual to contextless bandits gives rise to the following result, combined with \cref{thm:lb}, \cref{prop:lrmse} and \cref{thm:regrminimax}:
\begin{theorem}
Pick any $n>0$, $\mu$, $\pi_D$, $\pi$ and $\sigma^2$. Then, one has
$
R_n^* (\mu,\pi,\pi_D,\sigma^2)= \Omega\left( \max_{B \subset\X\times\A}\{\sum_{(x,a)\in B}\mu(x)\pi(a|x)\}^2\{1-\sum_{(x,a)\in B}\mu(x)\pi_d(a|x)\}^n + V_1/n\right),
$
$\MSE{\vlr} = (V_1+V_2)/n$, 
and $\MSE{\vreg} \le C R_n^* (\mu,\pi,\pi_D,\sigma^2)$, for $C=MK\{\min(4MK,\max_{x,a}r_\Phi^2(a)/\sigma_\Phi^2(a))+5\}R_n^*(\mu,\pi,\pi_D,\sigma^2)$.
Furthermore, the MSE of the regression estimator approaches the minimax risk as sample size grows to infinity.
\end{theorem}
\section{Extension to Markov Decision Processes}
\label{sec:mdp}

In this section, we consider an extension to fixed-horizon, finite Markov decision processes (MDPs), which will be reduced to the bandit problem studied in Section~\ref{sec:mab}.  Here, an MDP is described by a tuple $M=\langle \X,\A,P,\Phi,\nu,H \rangle$, where $\X=\{1,\ldots,N\}$ is the set of states, $\A=\{1,\ldots,K\}$ the set of actions, $P$ the transition kernel, $\Phi:\X\times\A\mapsto\R$ the reward function, $\nu$ the start-state distribution, and $H$ the horizon.
A policy $\pi:\X\mapsto[0,1]^K$ maps states to distributions over actions, and we use $\pi(a|x)$ to denote the probability of choosing action $a$ in state $x$.  The set of policies over $\X$ and $\A$ is denoted by $\Pi(\X,\A)$.  Given a policy $\pi\in\Pi(\X,\A)$, a trajectory of length $H$, denoted $T=(X,A,R)$ (for $X\in\X^H$, $A\in\A^H$, and $R\in\R^H$), is generated as follows: $X(1)\in\nu(\cdot)$; for $h\in\{1,\ldots,H\}$, $A(h)\sim\pi(\cdot|X(h))$, $R(h)\sim\Phi(\cdot|X_{(h)},A_{(h)})$, and $X(h+1)\sim P(\cdot|X_{(h)},A_{(h)})$.  The policy value is defined by 
$v^\pi_\Phi \defeq \E_T[\sum_{h=1}^H R(h)]$.  For simplicity, we again assume $\rmax=1$. \todoc{Do we assume $r(x,a)\in [0,\rmax]$ or $v^\pi\in [0,\rmax]$?}

The off-policy evaluation problem is to estimate $v^\pi_\Phi$ from data $D^n=\{T_t\}_{1 \le t \le n}$, where each trajectory $T_t$ is independently generated by an exploration policy $\pi_D\in\Pi(\X,\A)$.  Here, we assume the reward distribution $\Phi$ is unknown; other quantities including $\nu$, $P$, $H$, $\pi$, and $\pi_D$ are all known.  Again, we measure the quality of an estimate $\hat{v}$ by its mean squared error: $\MSE{\hat{v}}\defeq\left[(\hat{v}-v^\pi_\Phi)^2\right]$.

The key observation is that, similarly to the contextual case, the off-policy evaluation problem in fixed-horizon, finite MDPs can be reduced to the multi-armed bandit case.  Specifically, every possible length-$H$ trajectory is an ``augmented action'' belong to the product space $\T=\X^{H+1}\times\A^H$.  The total number of augmented actions is at most $N^{H+1}K^H$.  The distribution over this augmented action space, induced by $\nu$, $P$ and policy $\pi$, is given by:
$
\mu(x(1),\ldots,x(H+1),a(1),\ldots,a(H))\defeq\nu(x(1))\prod_{h=1}^H\pi(a(h)|x(h))P(x(h+1)|x(h),a(h)) \,.
$
This way, the off-policy evaluation problem is reduced to the bandit case with corresponding induced distributions over augmented actions.

For any $(x,a)\in\T$, let $r_\Phi(x,a)\defeq\E[R]$ and $\sigma^2_\Phi(x,a)\defeq\V(R)$, where $R(h)\sim\Phi(\cdot|x(h),a(h))$.  Define the minimax optimal risk subject to constraints $\sigma^2_\Phi(x,a)\le\sigma^2(x,a)$ for all $(x,a)\in\T$ by:
\[
R^*_n(\nu,\pi,\pi_D,P,H,\sigma^2)\defeq\inf_\Aalg\sup_{\Phi:\sigma^2_\Phi\le\sigma^2} \E\left[ \left(\hat{v}_\Aalg(\nu,\pi,\pi_D,P,H,D^n)-v^\pi_\Phi\right)^2 \right]\,.
\]
Similar to previous sections, one may adjust the definitions of quantities like $V_1$ and $V_2$, and conclude the following result using with \cref{thm:lb}, \cref{prop:lrmse} and \cref{thm:regrminimax}:
\begin{theorem}
Pick any $n>0$, $\nu$, $\pi_D$, $\pi$, $P$, $H$, and $\sigma^2$.  Then, one has $R_n^*(\nu,\pi,\pi_D,P,H,\sigma^2)=\Omega\left( \max_{B\subset\T}\{\sum_{(x,a)\in\T}\mu(x,a)\}^2 \{1-\sum_{(x,a)\in\T}\mu_D(\tau)\}^n + V_1/n\right)$, $\MSE{\vlr}=(V_1+V_2)/n$, and $\MSE{\vreg} \le CR_n^*(\nu,\pi,\pi_D,P,H,\sigma^2)$ for $C = N^{H+1}K^H\{\min(4N^{H+1}K^H,\max_{(x,a)\in\T}\frac{r_\Phi^2(x,a)}{\sigma_\Phi^2(x,a)}) + 5\}$.  Finally, the MSE of the regression estimator approaches the minimax risk as sample size $n$ grows to infinity.
\end{theorem}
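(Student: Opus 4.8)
The plan is to prove all four claims by reduction to the contextless bandit results of \cref{thm:lb}, \cref{prop:lrmse} and \cref{thm:regrminimax}, exactly as the preceding text sets up. First I would make the correspondence precise: identify each length-$H$ trajectory $T=(X,A,R)$ with an augmented action $(x,a)\in\T=\X^{H+1}\times\A^H$, so that $|\T|\le N^{H+1}K^H$; take the cumulative return $\sum_{h=1}^H R(h)$ as the scalar ``bandit reward'' attached to $(x,a)$; and let $\mu$ and $\mu_D$ be the distributions over $\T$ induced by $(\nu,P,\pi)$ and $(\nu,P,\pi_D)$ respectively (as defined just above the statement). Under this map the randomness of the state transitions is absorbed into \emph{which} augmented action is drawn, and conditioned on a fixed $(x,a)$ only the per-step reward noise remains.

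Two structural facts make the reduction faithful, and checking them is the technical heart of the argument. The first is that the importance weight factorizes as $\mu(x,a)/\mu_D(x,a)=\prod_{h=1}^H \pi(a(h)|x(h))/\pi_D(a(h)|x(h))$: in the ratio the start factor $\nu(x(1))$ and every transition factor $P(x(h+1)|x(h),a(h))$ cancel, leaving a product of per-step policy ratios computable from known quantities alone. The second is that, conditioned on $(x,a)$, the rewards $R(h)\sim\Phi(\cdot|x(h),a(h))$ are independent, so the augmented reward has mean $r_\Phi(x,a)=\sum_{h=1}^H r_\Phi(x(h),a(h))$ and variance $\sigma^2_\Phi(x,a)=\sum_{h=1}^H \sigma^2_\Phi(x(h),a(h))$. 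With these two facts, the MDP likelihood-ratio estimator $\vlr=\frac1n\sum_i \frac{\pi(A_i|X_i)}{\pi_D(A_i|X_i)}R_i$ and the regression estimator coincide with their bandit counterparts over the action set $\T$, and $V_1,V_2$ reduce to the contextless definitions with $\pi,\pi_D,\sigma^2_\Phi$ replaced by their augmented-action versions.

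Once the reduction is in place, the four conclusions follow by direct invocation. \cref{thm:lb} applied to action set $\T$ yields the stated $\Omega(\cdot)$ lower bound, with the subset maximization ranging over $B\subset\T$. \cref{prop:lrmse} gives $\MSE{\vlr}=(V_1+V_2)/n$ verbatim. \cref{thm:regrminimax}, part~\ref{thm:regrminimax:part1}, gives the regression upper bound with the role of the action count $K$ played by $|\T|\le N^{H+1}K^H$, producing the constant $C=N^{H+1}K^H\{\min(4N^{H+1}K^H,\max_{(x,a)\in\T}\frac{r_\Phi^2(x,a)}{\sigma_\Phi^2(x,a)})+5\}$. Finally, part~(iii) of the same theorem gives the asymptotic minimax optimality of $\vreg$ as $n\to\infty$.

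I expect the main obstacle to be the lower bound, not the upper bound or the LR identity. The delicate point is that the MDP adversary chooses rewards \emph{per state-action pair}, and a single pair $(x(h),a(h))$ appears in many trajectories, so the augmented-action reward distributions are coupled rather than freely assignable. Consequently the bandit lower-bound constructions of \cref{thm:lb} (the zero-outside-$B$ construction and the rank-one normal mean-shift family used in the Fano step) must be shown to be \emph{realizable} through per-step rewards; this matters doubly because the regression upper bound is expressed relative to $R_n^*$ and therefore itself depends on the lower bound $R_n^*\ge V_1/n$ holding in the coupled MDP model. A secondary bookkeeping issue is boundedness: with per-step rewards in $[0,1]$ the cumulative return lies in $[0,H]$, so the effective reward range in the reduced problem is $\rmax=H$ rather than $1$, a factor that must be carried through the lower bound and the regression bound but is dominated by, and absorbed into, the $|\T|$ scaling that governs $C$.
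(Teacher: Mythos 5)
Your proposal takes essentially the same route as the paper: the paper's entire proof of this theorem is the augmented-action reduction over $\T=\X^{H+1}\times\A^H$ (importance weights factoring so that $\nu$ and $P$ cancel, per-step reward noise summing into the trajectory-level mean and variance) followed by direct invocation of \cref{thm:lb}, \cref{prop:lrmse} and \cref{thm:regrminimax} with $K$ replaced by $|\T|\le N^{H+1}K^H$, exactly as you lay out. The two caveats you flag --- that the lower-bound constructions must be realizable through per-step rewards despite the coupling of trajectory-level means, and that the effective reward range becomes $\rmax=H$ --- are genuine subtleties that the paper itself silently glosses over, so your treatment is if anything more careful than the original.
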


Finally, it can be shown that in general an exponential dependence on $H$ is unavoidable.  An example is the ``combination lock'' MDP with $N$ states $\X=\{1,\ldots,N\}$ and $K=2$ actions $\A=\{L,R\}$; the start state is $x_*=1$.  In any state $x$, action $L$ takes the learner back to the initial state $x_*$, while action $R$ takes the learner to state $x+1$.  Assume reward is always zero except in state $N$ where it can be $\{0,\rmax\}$.  It is easy to verify that, if there exists constant $p_*$ such that $p_*\le\pi_D(L|x)$ for all $x$, then it takes exponentially many steps to reach state $N$ from $x_*$ under policy $\pi_D$.  Consequently, it requires at least exponentially many trajectories to evaluate a policy $\pi$ that always takes action $R$.

\section{Connection to Semi-supervised Learning}
\label{sec:ssl}

In semi-supervised learning one is given a large unlabeled dataset together with a smaller, 
labeled dataset. The hope is that the large unlabeled dataset will help to decrease the error of an estimator
whose job is to predict some value that depends on the unknown distribution generating the data.
Clearly, the off-policy policy evaluation problem can be connected to semi-supervised learning:
Given the data $\{(A_i,R_i)\}_{i=1,\ldots,n}$ generated from $\pi_D$ and $\Phi$, the goal being to predict $\vtrue$.
A large ``unlabelled'' dataset $\{A_j\}_{j=1,\ldots,m}$ with $m\gg n$ helps one to identify $\pi_D$.
Indeed, an intriguing idea is to use $\pi_D$ in some clever way to help improving the prediction of $\vtrue$. 
The most obvious way is to use it in the likelihood ratio estimator. However, as we have shown, 
the MSE of the likelihood ratio estimator can be much larger than that of the regression estimator, which
\emph{does not use  $\pi_D$ even if it is available}. Further, the MSE of the regression estimator is unimprovable, apart from a constant factor for finite sample sizes, while it also rapidly approaches the \emph{optimal} minimax MSE as the sample size grow. Hence, it seems unlikely that knowing $\pi_D$ can help in this problem.

Note that the regression estimator can also be thought as the solution to a least-squares regression problem and our results thus have implications for using unlabelled data together with least-squares estimators.
Indeed, if $X_i\in \{0,1\}^K$ is chosen to be the $A_i$s unit vector of the standard Euclidean basis, we can write
$\hat{r} = (X^\top X)^{\dagger} X^\top R$, where $\dagger$ denotes pseudo-inverse,
$X\in \R^{n\times K}$ and $R\in \R^{n}$ are defined by $R = (R_1,\ldots,R_n)^\top$, $X^\top = (X_1,\ldots,X_n)$.
Notice that here $G_n = \frac1n X^\top X = \frac1n\sum_{i=1}^n X_i X_i^\top = \diag( \hat{\pi}_D(1), \ldots, \hat{\pi}_D(K) )$.
Thus, $\frac1n X^\top X $ can be seen as an estimate of $G = \EE{X_i X_i^\top } = 
\diag( {\pi}_D(1), \ldots, {\pi}_D(K) )$.
Having access to a large unlabelled set $U_1,\ldots,U_m$ (i.e., $m\gg n$) coming from the same distribution as the $X_i$s, 
it is tempting to replace $\frac1nX^\top X$ with a ``better estimate'', $H_m = \frac1m \sum_{i=1}^m U_i U_i^\top$.
Taking $m$ to the limit, we see that $H_m$ converges to $G$. Now, replacing $G_n$ with $H_m \approx G$ in the least squares estimate, and then taking the weighted sum of the resulting values with weights $\pi(a)$, we get the likelihood ratio estimator. Again, since this was shown to be inferior to the regression estimator, replacing $G_n$ with $H_m$ sound like an idea of dubious status. In fact, preliminary experiments with simple simulated scenarios confirmed that $G_n$ indeed should not be replaced with $H_m$, even when $m$ is very large in least-squares regression estimation.

\if0
In semi-supervised learning one is given a large unlabeled dataset $\mathcal{U}^m = (U_1,\ldots,U_m)$ together with a smaller, labeled dataset $\mathcal{D}^n = ((X_1,Y_1),\ldots, (X_n,Y_n))$, 
where $(X_i,Y_i)$ is i.i.d., \todoc{Have used this abbreviation?}
 just like $(U_i)$, $\mathcal{U}^m$ and $\mathcal{D}^n$ are independent of each other 
and the distribution of $U_i$ and $X_j$ are identical.
An important special case is when $Y_i$ is real valued and the goal is to learn 
the regression function $m(x) = \EE{Y_1|X_1=x}$ where $x$ sweeps though the range of the random variables $X_i$.
The assumption in semi-supervised learning 
is that having access to a large, unlabelled dataset $\mathcal{U}^m$ helps.
However, our previous results show that sometimes an unlabelled dataset will not help no matter how large it is.
To take things to an extreme, assume that $m=\infty$, so that we can know the distribution $\pi_D(\cdot)$ of the input $X_i$s.
Now, assume that $X_i$ takes values in a discrete space $\X = \{1,\ldots,K\}$.
Clearly, in this case, there is nothing to be gained by knowing $\pi_D$,

To see why, consider the case when $m$ is so large that we can assume that we know the common underlying distribution of $U_i$/$X_j$. Assume that $X_i$ takes values in a finite set $\{1,\ldots,K\}$ and assume that the goal is not to estimate $m(x)$, but to estimate

Let us consider again the (contextless) bandit case.
It is not hard to see 
\fi


\end{document}